\newcommand{\pw}[1]{\textcolor[rgb]{0.6,0.6,0}{#1}}
\newcommand{\red}[1]{\textcolor[rgb]{1,0,0}{#1}}
  \begingroup\color{red}}{%
\newtheorem{thm}{Theorem}
\newtheorem{cor}[thm]{Corollary}
\newtheorem{prop}[thm]{Proposition}
\theoremstyle{definition}
\newtheorem{definition}[thm]{Definition}
\newtheorem{remark}[thm]{Remark}
\newtheorem{example}[thm]{Example}
\newcommand{\s}{\mathcal S}
\newcommand{\avg}{\mathop{{}\mathbb{E}}}
\newcommand{\e}{\epsilon}
\title{
Evolution of beliefs in social networks
\thanks{Correspondence may be addressed to \texttt{pparanamana@saintmarys.edu}.} 
}
\author{%
  Pushpi Paranamana$^1$, Pei Wang$^2$ \& Patrick Shafto$^{2,3}$\\
  
  $^1$Saint Mary's College, Notre Dame\\
  $^2$ Rutgers University--Newark\\
  $^3$ Institute for Advanced Study, Princeton\\
 }
\begin{document}

\date{}
\maketitle

\begin{abstract}
    Evolution of beliefs of a society are a product of interactions between people (horizontal transmission) in the society over generations (vertical transmission). 
    Researchers have studied both horizontal and vertical transmission separately. Extending prior work, we propose a new theoretical framework which 
    allows application of tools from Markov chain theory to the analysis of belief evolution via horizontal and vertical transmission. We analyze three cases: static network, randomly changing network, and homophily-based dynamic network. Whereas the former two assume network structure is independent of beliefs, the latter assumes that people tend to communicate with those who have similar beliefs. We prove under general conditions that both static and randomly changing networks converge to a single set of beliefs among all individuals along with the rate of convergence. We prove that homophily-based network structures do not in general converge to a single set of beliefs shared by all and prove lower bounds on the number of different limiting beliefs as a function of initial beliefs. We conclude by discussing implications for prior theories and directions for future work. 
\end{abstract}

\section{Introduction}

Evolution of beliefs, individual and cultural, is the result of vertical transmission between generations and horizontal transmission within a generation.
Research in cognitive science has developed models of vertical transmission, through connections to probabilistic models of cognition \cite{chater2006probabilistic} and used such models to investigate innate cognitive constraints and connections to experience 
\cite{griffiths2007language, kirby2007innateness}. Separately, research in network science has developed theories that explain horizontal transmission, the social dynamics of transmission and diffusion patterns \cite{newman2003structure, zhou2020realistic}. 
Because beliefs are shaped both by vertical and horizontal transmission, any successful theory of evolution of beliefs will need to combine aspects of both approaches. We propose a mathematical approach that enables detailed analysis of the long run consequences of vertical and horizontal transmission for individual and cultural beliefs. 

Theories in cognitive science frame vertical transmission through evolution as functional adaptations of cognitive capacities, such as language, beliefs, knowledge and metacognition, to ancestral environment. 
\cite{griffiths2007language, whalen2017adding, kirby2007innateness, suchow2017evolution} have developed methods to interpret vertical transmission between Bayesian agents as Markov chains, thus revealing innate cognitive constraints and structures as the outcome of such processes. For example, \cite{griffiths2007language} 
interpret transmission of language from parents to children as a Markov chain, which leads to the conclusion that, in the absence of other influences, the resulting observed distribution of languages reflects our prior biases about language and language structures. 

However, cognition and memory are sustained by both communicative and cultural aspects \cite{candia2019universal, zhou2020realistic} and reflect social influences \cite{roberts2018social, abrams2011dynamics}. This horizontal transmission is intrinsically bidirectional and introduces the possibility of long term consequences of social network structures for beliefs. Network theory has studied transmission over social networks \cite{boccaletti2006complex, delvenne2015diffusion} for cases including diseases \cite{pastor2015epidemic, huang2019epirank}, information \cite{zhou2020realistic, zhan2019information, wang2013characterizing}, opinions \cite{castellano2012social, quattrociocchi2014opinion}, and rumours \cite{moreno2004dynamics}. 
However, in these models transmission is formalized as a property that can be caught or passed between agents. This is suitable for diseases and facts, but beliefs are more naturally represented as distributions over some latent space, as in probabilistic models of cognition used to model vertical transmission.

In this article, we combine both vertical and horizontal transmission to explore the long term evolution of beliefs. We provide a mathematical formulation to analyze the limiting distribution of beliefs in societies based on sociodynamic aspects and cognitive aspects of belief evolution. This limiting distribution tells us the long term belief distribution of each individual. Moreover this provides a framework to explore the long term belief evolution of groups and/ or of the society as a whole. Integrating classical results of time homogeneous and inhomegeneous Markov chain theories, we provide conditions on the network structures--static and dynamic at random--that result in homogeneous/hetrogeneous belief systems among individuals (or groups). Moreover, we provide rates of convergence of the models to their limiting behaviors, for both static and random cases. Prior studies show that individuals in a social network may tend to connect to individuals who share similar interests, and thus it is considered as an important evolutionary mechanism \cite{liu2018competition}.
We integrate this assortive dynamics in which networks are formed based on homophily and prove conditions under which societies will converge to heterogenous beliefs.



There has been extensive research on how belief diversity enhances the collective intelligence. 
A society that collectively has similar beliefs offers little chance for collective decision making to improve over any individuals. 
If individuals have different beliefs, collective accuracy can be enhanced. 
A simple example comes from ``wisdom of crowds'' effects in which the average of a group of people's guesses is more accurate than most individuals \cite{galton1907vox}, but many more examples exist in the decision making literature. 
Integration of multiple beliefs and, diversity in beliefs is thus required for underlying collective intelligence \cite{novaes2018individuals, keuschnigg2017crowd,broomell2009experts}. In this work we explore the network and belief structures that result in belief homogeneity vs heterogeneity under three scenarios: static networks, randomly changing networks and homophily-based networks. Thus the results can be used to explore conditions on optimal structures that improves collective accuracy and evolution.

\section{Formulation of the problem} \label{Sec:Formulation}

Our aim is to develop a model that one can use to analyze evolution of individual and societal beliefs through both vertical and horizontal transmission.
%
%
Our approach builds on prior research in the cognitive science literature formalizing vertical transmission as a Markov Chain \cite{kirby2007innateness, whalen2017adding, griffiths2007language}, while integrating horizontal transmission from network theory.

To integrate horizontal transmission, we formalize interactions among individuals in a society with a given, possibly dynamic, structure. As in prior work, individuals' initial beliefs are assumed to be sampled from a given distribution. Individuals within a society will interact with subsets of other individuals as defined by an adjacency matrix defining network structure. Networks may take a variety of forms including unidirectional and bidirectional, static and dynamic, and belief dependent. 
Each of these cases can be represented as a (collection of) adjacency matrix (matrices).

\begin{definition} \textbf{Evolution of beliefs in social networks.}
Consider a set of people $\mathcal{P}=\{{\alpha_i}\}_{i=1}^r$ in the society and a set of concepts $\mathcal{H}=\{\beta_k\}_{k=1}^s$.
Denote people's priors on $\mathcal{H}$ by $M=\left(m_{jk}\right)_{r\times s}$, 
the network structure over which people may communicate at time $t$ by $P_t=\left(p_{ij}\right)_{r\times r}$, 
and the concept structure at time $t$ by $H_t=\left(h_{kl}\right)_{s \times s}$. All are row stochastic matrices. 
Let $P_0=H_0=I$, where $I$ is the identity matrix of corresponding order.
 Define  $$ Q_n(P_t, H_t, M)=\prod_{t=0}^{t=n} P_t\, M \prod_{t=0}^{t=n} H_{t}, $$ \label{eq:model} 
where $Q_n$ represents the society's beliefs at time $n$, and the long-run beliefs are analyzed as $n~\rightarrow\infty$. That is, at each time $n$, $Q_n$ is the product $P_n \dots, P_0 M H_n,\dots H_0$ . 
\end{definition}
\noindent
The model formulates the time evolution of people's beliefs. $m_{jk}$ represents the initial belief (prior) of the person $\alpha_j$ on the concept $\beta_k$, 
$p_{ij}$ denotes the weight that the $i^{th}$ person gives to the $j^{th}$ person's information and $h_{kl}$ denotes the 
degree to which concept $\beta_l$ may be confused for $\beta_k$. 
A variety of properties can be captured in the matrices $P_t$ and $H_t$. Consider $P_t$. Absence of direct transmission is formalized when $p_{ij}=p_{ji}=0$. Bidirectional transmission is formalized by $p_{ij}>0$ and $p_{ji}>0$. Unidirectional transmission is formalized when either $p_{ij}>0$ and $p_{ji}=0$ or $p_{ij}=0$ and $p_{ji}>0$. Different network structures including random graphs, small world and scale-free networks \cite{albert2002statistical} can be formalized through the construction of adjacencies. Dynamic network structures \cite{li2017fundamental} are formalized by introducing the subscripts $P_t$ and $H_t$ to indicate the network structure at time $t$.
Notice that $P_t, M$ and $H_t$ are stochastic matrices. Therefore for any $n$, $Q_n(P_t,H_t,M)$ is also a stochastic matrix.
This approach combines both the sociodynamic and the cognitive aspects of belief evolution which helps to evaluate society's vertical and horizontal transmission simultaneously. 

\begin{table}[H]
 \caption{Model Summary}
    \label{tab:unifying}

    \centering

\begin{tabular}{ ll }
 \toprule
Notation  & Definition\\
 \midrule
$\mathcal{P}=\{{\alpha_i}\}_{i=1}^r$ & a set of people in the society, where $\alpha_i$ denotes the $i^{th}$ person\\
$\mathcal{H}=\{\beta_k\}_{k=1}^s$ & a set of concepts, where $\beta_k$ denotes the $k^{th}$ concept  \\
$M=\left(m_{jk}\right)_{r\times s}$ & a row stochastic matrix records a set of people's priors over a set of concepts \\
 & each row represents a person, each column represents a concept\\
 & $m_{jk}$ denotes the initial belief of person $\alpha_j$ on concept $\beta_k$\\
$P_t=\left(p_{ij}\right)_{r\times r}$ & a row stochastic matrix denotes the network structure at time $t$\\
&  $p_{ij}$ denotes the weight that $i^{th}$ person gives to $j^{th}$ person's information \\
$H_t=\left(h_{kl}\right)_{s \times s}$ & a row stochastic matrix denotes the concept structure at time $t$\\
& $h_{kl}$ the degree to which concept $\beta_l$ may be confused for $\beta_k$\\
$ Q_n(P_t, H_t, M)$ & a row stochastic matrix modeled by $\prod_{t=0}^{t=n} P_t\, M \prod_{t=0}^{t=n} H_{t}$\\
& records the society's beliefs at time $n$\\
\bottomrule
\end{tabular}
    \vspace{1pc}
   
\end{table}

Next, we illustrate the design of the structures and the model using some stylized examples.



\begin{example}
Consider a neighborhood with three people $\alpha_1, \alpha_2, \alpha_3$ with the network structure at time $t$ given by 
\[ P_t=
\begin{blockarray}{cccc}
&\alpha_1 & \alpha_2 & \alpha_3  \\
\begin{block}{c(ccc)}
 \alpha_1 & 1 & 0 & 0 \\
 \alpha_2 & \frac{2}{3} & 0  & \frac{1}{3} \\
 \alpha_3 & \frac{1}{2} & \frac{1}{4} & \frac{1}{4} \\
\end{block}
\end{blockarray}
\]
The network structure can be depicted in Figure~\ref{NS}, where the directed edge from $\alpha_i$ to $\alpha_j$ denotes $p_{ij}$.

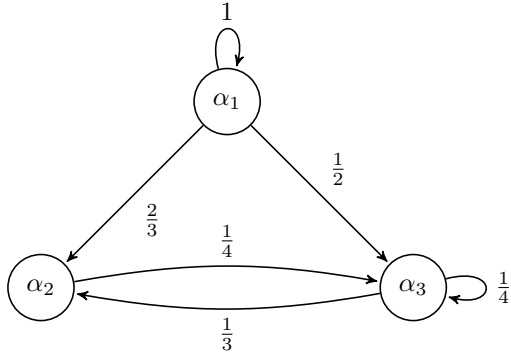
\begin{figure}[H]
\begin{center}
\begin{tikzpicture}[->,>=stealth',shorten >=1pt,auto,
  node distance=3.5cm,semithick, bend angle=10]
\tikzstyle{every state}=[fill=none,draw=black,text=black,shape=circle]
\node[state] (x1) {$\alpha_1$};
\node[state] (x3) [below right of =x1] {$\alpha_3$};
\node[state] (x2) [below left of =x1] {$\alpha_2$};
\draw[name path=e2] (x1) to node {$\frac{2}{3}$} (x2);
\draw[name path=e3] (x1) to node {$\frac{1}{2}$} (x3);
\draw[name path=e5] (x2) to [bend left] node {$\frac{1}{4}$} (x3);
\draw[name path=e6] (x3) to [bend left] node {$\frac{1}{3}$} (x2);
\draw(x1) to [loop above] node {$1$} (x1);
\draw(x3) to [loop right] node {$\frac{1}{4}$} (x3);
\end{tikzpicture}
\caption{Network structure} \label{NS}
\end{center}
\end{figure}

Here, $\alpha_1$ does not believe what anyone else says but believes himself $100\%$, while $\alpha_2$ only believes what others say. However, $\alpha_3$ believes him self and others with certain percentages. In practice, this may model the communication between three listeners $\alpha_1, \alpha_2, \alpha_3$, where $\alpha_1$ may be a speaker,  $\alpha_2$ may model a new student in the class who is learning from his teacher $\alpha_1$ and a peer $\alpha_3$.
\end{example}

Similarly, the formulation of the concept structure can be viewed as a graph. Instead of pointing from speaker to listener, arrows point from a concept toward a concept it can replace (be confused with).
Note that the concept structure is modeled as a distribution rather than single values. Each value corresponds to the degree (weight) to which one concept may be confused with another. Notice that the concept structure corresponds to the vertical component of the model. That is, the generational or cultural transmission of beliefs. Confusability of beliefs is a reasonable notion to denote the imperfect dynamics over generations due to changes in cultural traits and new found information over time leading to generation gaps.



\begin{example}\label{stylizedEx} 
Consider a group of $5$ people, each holding a belief on $5$ distinct concepts.
Suppose people have a prior belief distribution given by and $M= I_{5\times 5}$. That is $\alpha_i$ believes only on the concept $\beta_i$, for all $i=1,..,5.$ 
Let
$P=\small{
\begin{pmatrix}
0.194 & 0.387 & 0.419 & 0  & 0   \\
0.29 & 0.323 & 0.387 & 0   &  0   \\
0.261 & 0.696 & 0.043 & 0   & 0   \\
0   &  0   & 0   & 0.448 &  0.552\\
0   & 0   & 0   & 0.2  & 0.8  
\end{pmatrix}}$ be the network structure and $H=\small{
\begin{pmatrix}
0.342 & 0.421 & 0.026 & 0.105 & 0.105 \\
0.163 & 0.204 & 0.388 & 0.102 & 0.143\\
0.289 & 0.044 & 0.156 & 0.178 & 0.333 \\
0.316 & 0.105 & 0.246 & 0.158 & 0.175\\
0.304 & 0.107 & 0.286 & 0.286 & 0.018
\end{pmatrix}}$ be the concept structure. Assume that both the network structure and the concept structure are constant over time. 
That is, at every point in time each individual communicate with others in the society according to the network structure $P$. Notice that the society consists of two groups that do not talk to the other group. Similarly the concept structure at each time step, which gives the confusion between concepts at that particular time step, does not change over time. Based on these interactions, each individuals are influenced to alter their beliefs based on the beliefs of those whom they communicated with and the confusability between concepts.
We can use the model to explore the belief evolution over time. In particular, $Q_n$ represents the belief distribution of the society at time $n$, where its $i^{th}$ row denotes the belief distribution of the $i^{th}$ person at time $n$. By looking at how the structure of $Q_n$ changes as $n$ changes, not only we can explore how individual beliefs changes but also how societal beliefs evolve over time. Moreover, analyzing $Q_n$ as $n\to \infty$, we can explore the societal long term belief distribution. In this example, we can show that the society stabilizes to a homogeneous belief distribution where all the individuals in the society have the same beliefs. (Please see Example~\ref{Ex2} for detailed analysis). It is interesting to see the homogeneity of the society even though there are two groups that do not talk to the other group at all.
\end{example} 

In the next section, we analyze the long term behavior of the model theoretically which sheds light on belief evolution and societal belief diversity. The above example illustrates the model for a static network structure and a static concept structure. However the structures could be dynamic, thus we will consider three phenomena: static structures, randomly changing structures and homophily-based dynamic structures.
Analyzing this model will help us better understand the minimal conditions necessary for sustained belief heterogeneity, conditions on which the homogeneity is attained.

\noindent Note: Markov chains are widely used in many
applications in predicting variation tendencies of random processes including modeling inter generational beliefs. Belief evolution can be studied as transmission chains where the beliefs evolve through time via horizontal and vertical transmission, which is mathematically parallel to analyzing Markov
chains. So in our model both the network structure and the concept structure are considered as Markov chains and are represented by corresponding transition matrices.

\section{Analyzing the belief evolution in social networks}\label{sec2}

In this section, we explore belief change in the long run, individually and societally. 
We analyze under what conditions a society will attain homogeneity of beliefs and whether the society will evolve into groups with distinct beliefs. Moreover, we explore how fast a society will converge to its final belief system.
As discussed in Section~\ref{Sec:Formulation}, networks can be time invariant as well as time variant. Therefore we investigate the belief evolution for time homogeneous and time inhomogeneous cases separately. 

\subsection{Belief evolution over stable social and belief networks}\label{sec:homo_time}

First, we analyze the belief evolution when network and concept structures are time homogeneous. 
That is, we assume that $\forall t, P_t=P$ and $H_t=H$; $P$ and $H$ are fixed matrices. Then the operator $Q$ simplifies to 
$$Q_{n}(P,H,M)=P^n\,M \,H^n.$$
For a square matrix $A$, $A^n$ denotes the multiplication of $A$ for $n$ times. 
\footnote{We use Markov chain and corresponding transition matrix interchangeably, when there is no confusion.} 

\subsubsection{Convergence and limiting distribution} \label{sec:limit time homo}
As transition matrices of Markov Chains, important distinctions about the network and concept structure are whether they are indecomposable/decomposable and reducible/irreducible. The limiting behavior depends on the structures as well as the states of the people and beliefs (transient/persistent). Therefore we define: 

\begin{definition}\textbf{Irreducibility:} \label{Def:closed,irred,indecom}
A set $C$ of states is closed if no state outside $C$ can be reached from any state 
$j$ in $C$.
A Markov chain is irreducible if there exists no closed sets other than the set of all sets; otherwise, it is reducible.
\end{definition}

\begin{definition}\textbf{Indecomposability:} \label{def:indecom}
A Markov chain is indecomposable if it contains at most one closed set of states other than the set of all states. Otherwise it is decomposable.
\end{definition}

\begin{definition}\textbf{Transient/Recurrent states:} \label{def: transient,persistent}
State $i$ is called transient if, given that we start in state $i$, there is a non-zero probability that we will never return to $i$. 
State $i$ is called recurrent (or persistent) if it is not transient. 
\end{definition}

\begin{definition}\textbf{Stationary distribution:}
Let $A$ be a transition matrix.
A stationary distribution (steady state distribution) $\bm{\pi}$ is a non negative stochastic (row) vector, 
such that $\bm{\pi} A =\bm{\pi}.$
\end{definition}


\noindent
An indecomposable and aperiodic (Definition \ref{def:period}) markov chain has a unique stationary distribution $\bm \pi$ as $n\to \infty$ \cite{gravner2010lecture}. That is, the transition matrix converges to a matrix with same rows equals to $\bm \pi$.
Moreover, $\bm \pi$ is the left eigenvector of the associated transition matrix that corresponds to the unit eigenvalue (which exists and is unique). 
If the Markov chain is indecomposable but reducible, the transient states vanish in the limit. 
Similarly, we can analyze the structure of the limit of decomposable, aperiodic chains using 
Propositions \ref{prop:period} and \ref{prop:decom,aperiod}. 



\begin{prop} \label{prop:homogeneous} \footnote{All proofs are included in the Supplemental Material.}
Assume the network structure $P_{r\times r}$ and the concept structure $H_{s\times s}$ are aperiodic matrices. Let $M_{r\times s}$ be the initial belief distribution in the society. The society will stabilize in the long run. That is $\|Q_{n+1}-Q_n\|\to 0$ as $n\to \infty$.

\begin{enumerate} [(i)]
    \item \label{item1}
If $H$ is indecomposable, then in the long run, the society stabilizes to a single belief distribution that does not depend on $P$ or $M$. 
That is, there exists a steady state distribution $\bm{\pi}=\{\pi_1,....,\pi_s\}$ 
such that for any $P$ and $M$
$$\lim_{n\to \infty}Q_{n}=\lim_{n\to \infty}P^nMH^n=
\begin{pmatrix}
\pi_1 & ...& \pi_s\\
\vdots & \ddots & \vdots \\
\pi_1 & ...& \pi_s\\
\end{pmatrix}_{(r\times s)}
$$

\item \label{item2}
If $H$ is decomposable and 
$P$ is indecomposable, then in the long run, the society stabilizes to single a belief distribution that depends on $M$ and $H$.
That is for any $M$ and $H$, there exists a steady state distribution $\bm{\sigma}=\{\sigma_1,\sigma_2,...,\sigma_s\}$ such that:
$$\lim_{n\to \infty}Q_{n}=\lim_{n\to \infty}P^nMH^n=
\begin{pmatrix}
\sigma_1 & ...& \sigma_s\\
\vdots & \ddots & \vdots \\
\sigma_1 & ...& \sigma_s\\
\end{pmatrix}_{(r\times s)}
$$

\item \label{item3}
If $H$ and $P$ are both decomposable. Then the society will not have a single belief distribution in the long run.
That is, the rows of the matrix $\lim_{n\to \infty} Q_{n}$ are not all the same. 


\end{enumerate}
\end{prop}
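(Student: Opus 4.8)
The plan is to reduce the entire statement to the structure of the two limiting matrices $P^{\infty}:=\lim_{n\to\infty}P^{n}$ and $H^{\infty}:=\lim_{n\to\infty}H^{n}$. Since $P$ and $H$ are aperiodic, both limits exist by the convergence result quoted above, and because matrix multiplication is continuous and $M$ is fixed, $\lim_{n\to\infty}Q_{n}=\lim_{n\to\infty}P^{n}MH^{n}=P^{\infty}MH^{\infty}$. In particular the stabilization $\|Q_{n+1}-Q_{n}\|\to 0$ is immediate once these limits exist, and the remaining claim becomes a statement about the rows of the single matrix $P^{\infty}MH^{\infty}$.

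First I would pin down the block structure of $P^{\infty}$ using Proposition~\ref{prop:decom,aperiod}. Because $P$ is decomposable it has at least two closed (recurrent) classes $C_{1},\dots,C_{p}$ with $p\ge 2$; reordering the people so that the recurrent classes precede the transient states, a row of $P^{\infty}$ indexed by a person $i\in C_{a}$ equals the stationary distribution $\pi^{(a)}$ of that class, padded with zeros off $C_{a}$. Hence rows of $P^{\infty}$ coming from two different classes $C_{a},C_{b}$ have disjoint supports, so they are distinct; set $\delta:=\pi^{(a)}-\pi^{(b)}$, which satisfies $\delta\neq\mathbf{0}$ and $\delta\mathbf{1}=0$. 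The same proposition applied to $H$ shows that $H^{\infty}$ has rank at least two (its row space is spanned by the disjointly supported stationary distributions of the closed classes of $H$), so it is \emph{not} of the collapsing form $\mathbf{1}\bm\pi$ that arises in part~\ref{item1} when $H$ is indecomposable. This structural difference is exactly what keeps the long-run beliefs from being forced into a single row.

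With these facts in hand, rows $i\in C_{a}$ and $i'\in C_{b}$ of $\lim_{n\to\infty}Q_{n}=P^{\infty}MH^{\infty}$ coincide if and only if $\delta\,M\,H^{\infty}=\mathbf{0}$. The heart of the argument is therefore to exhibit a pair of recurrent classes for which $\delta M H^{\infty}\neq\mathbf{0}$, and this is the step I expect to be the main obstacle: although $\delta\neq\mathbf{0}$, the difference can in principle be annihilated either by $M$ (if the priors are degenerate, e.g.\ all rows of $M$ equal, in which case even two decomposable structures collapse to one belief) or by $H^{\infty}$ (if $\delta M$ happens to lie in its left null space). I would carry out this step under the non-degeneracy in force in the running examples --- distinct priors, with $M=I$ as the canonical case --- where $\delta M=\delta$ is supported on $C_{a}\cup C_{b}$, and then argue that the rank-$\ge 2$ structure of $H^{\infty}$ prevents $\delta M$ from lying in its left null space, so that $\delta M H^{\infty}\neq\mathbf{0}$.

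Finally, I would contrast the three regimes to confirm that both decomposability hypotheses are genuinely used. If $H$ were indecomposable, the collapsing form $H^{\infty}=\mathbf{1}\bm\pi$ erases the distinction and recovers part~\ref{item1}; if only $P$ were indecomposable, then $P^{\infty}=\mathbf{1}\bm\pi'$ makes every row of the product identical, giving part~\ref{item2}. It is precisely when \emph{both} limits retain rank at least two that the disjoint-support rows of $P^{\infty}$ survive multiplication by $MH^{\infty}$ and leave at least two distinct rows in $\lim_{n\to\infty}Q_{n}$, which is the desired heterogeneity.
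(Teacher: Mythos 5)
For parts~\ref{item1} and~\ref{item2} your argument is essentially the paper's: the paper also writes $\lim_n Q_n = (\lim_n P^nM)\,H^\infty$, notes that $\lim_n P^nM$ is row-stochastic, and observes that multiplying any row-stochastic matrix on the right by the rank-one limit $H^\infty=\mathbf{1}\bm\pi$ collapses every row to $\bm\pi$ (and symmetrically $P^\infty=\mathbf{1}\bm\pi'$ collapses the rows for part~\ref{item2}). Nothing to add there.

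For part~\ref{item3} you have correctly located the real difficulty, and in fact you have found a gap in the \emph{statement} that the paper's own proof (a one-line appeal to Proposition~\ref{prop:decom,aperiod} and part~\ref{item2}) silently steps over: two rows of $P^\infty M H^\infty$ indexed by different recurrent classes $C_a,C_b$ of $P$ coincide exactly when $\delta M H^\infty=\mathbf{0}$ with $\delta=\pi^{(a)}-\pi^{(b)}$, and this can happen. Your degenerate example ($M$ with equal rows) already kills the claim as stated. However, your proposed repair does not close the gap: it is not true that $H^\infty$ having rank $\ge 2$ prevents $\delta M$ from lying in its left null space, even when $M$ has distinct rows. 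Take $r=2$, $s=3$, $P=I_2$, $M=\left(\begin{smallmatrix}1&0&0\\0&1&0\end{smallmatrix}\right)$, and $H$ decomposable with closed classes $\{\beta_1,\beta_2\}$ and $\{\beta_3\}$. Then $\delta M=(1,-1,0)$, but rows $1$ and $2$ of $H^\infty$ are both equal to the stationary distribution of the class $\{\beta_1,\beta_2\}$, so $\delta M H^\infty=\mathbf{0}$ and both people converge to identical beliefs despite $P$, $H$ both being decomposable and $M=I$ restricted to two rows. So the non-degeneracy you need is genuinely a joint condition on $\delta$, $M$, and the closed-class structure of $H$ (namely that $\delta M$ not be constant on each closed class of $H$ after the transient concepts are washed out), not a rank condition on $H^\infty$ alone. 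The honest conclusion is that part~\ref{item3} holds only under such an additional hypothesis; your write-up should state that hypothesis explicitly rather than attributing the conclusion to the rank of $H^\infty$. With that proviso, your reduction to $\delta M H^\infty\ne\mathbf{0}$ is the right skeleton and is more informative than the proof the paper actually gives.
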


Notice that, if $H$ is indecomposable neither $P$ or $M$  have an effect on the limit of the model. That is, the concept structure dominates and controls the long run behavior, regardless of what the network structure is or what people initially believe. Moreover, as a Markov chain, transient beliefs (if there are any) vanish from the society. However if $H$ is decomposable, in addition to the concept structure, the network structure as well as the initial concept structure affect the long run behavior. The homogeneity of beliefs among people in the society depends on the network structure. In particular, if the  
network structure is indecomposable, then there will be a unique belief distribution in the society regardless of the initial beliefs. In summary, if either the network structure or the concept structure is indecomposable, the society will converge to a unique belief distribution in the long run. However, if both are decomposable, there will not be a single belief distribution in the society; there will be heterogeneity among individuals. These scenarios are illustrated in Examples 
\ref{Ex2}, \ref{Ex3}, \ref{Ex4}. 

\begin{example}\label{Ex2} 
Consider $P, M$ and $H$ given in Example~\ref{stylizedEx}.
Here, $P$ is decomposable and has two closed communicating classes and $H$ is indecomposable.
Then, $\lim_{n\to \infty}Q_n= \small{
\begin{pmatrix}
0.285 & 0.203 & 0.2  & 0.155 & 0.156\\
0.285 & 0.203 & 0.2  & 0.155 & 0.156\\
0.285 & 0.203 & 0.2  & 0.155 & 0.156\\
0.285 & 0.203 & 0.2  & 0.155 & 0.156\\
0.285 & 0.203 & 0.2  & 0.155 & 0.156
\end{pmatrix}}
$
In this example, even though there are people in the network who never talk to each other, everyone converges to the same beliefs in the long run. This is because the concept structure $H$, which is indecomposable, dominates.
\end{example}

\begin{example} \label{Ex3}
Consider $
P= \small{
\begin{pmatrix}
0.3 & 0.7 & 0\\
0.6 & 0.4 & 0\\
0 & 0 & 1
\end{pmatrix}},
\,\,  M=\small{
\begin{pmatrix} 
0.1 & 0.3 & 0.12& 0.02 & 0.28 & 0.18\\
0.06& 0.23& 0.1 & 0.23& 0.23& 0.15\\
0.03& 0.01& 0.39& 0.38& 0.09& 0.1 
\end{pmatrix}},$
  and 

 $
\,H=\small{
\begin{pmatrix}
0.803& 0.197& 0   & 0 &   0   & 0   \\
0.464& 0.536& 0&    0   & 0  & 0   \\
0&    0  &  0.272& 0.038& 0.02 & 0.669\\
0&    0   & 0.515& 0.017& 0.401& 0.068\\
0   & 0 & 0.144& 0.319& 0.002& 0.535\\
0 & 0 & 0.16 &0.357& 0.242& 0.241
\end{pmatrix}},
 $
where both $P$ and $H$ are decomposable. $P$ has two closed communicating calsses: $\{\alpha_1, \alpha_2\}$ and $\{\alpha_3\}$. Then the stationary distribution (rounded up to 3 decimal places) is
$\lim_{n\to \infty}Q_n=
\small{\begin{pmatrix}
0.239& 0.102& 0.169& 0.131& 0.115& 0.241\\
0.239& 0.102& 0.169& 0.131& 0.115& 0.241\\
0.028& 0.012& 0.245& 0.191& 0.167& 0.351
  \end{pmatrix} } .   
       $
 We can see that in the limit, people's beliefs are not the same. However, the beliefs of people who are in the same closed communicating class are the same. That is, $\alpha_1$ and $\alpha_2$ have the same beliefs while $\alpha_3$ has different beliefs.
\end{example}

\begin{example} \label{Ex4}
Consider
$P=
\small{\begin{pmatrix}
1   & 0   & 0  &  0  &  0  &  0   \\
0.04 & 0.12 & 0.027& 0.027& 0.139& 0.647\\
0.052 & 0.182& 0.044& 0.519& 0.152& 0.051\\
0.006& 0.111& 0.032& 0.037& 0.729& 0.085\\
0.062& 0.416& 0.012& 0.38 & 0.123& 0.007\\
0  & 0    0 &  0 & 0 & 1
\end{pmatrix}},
\,\,  M=
\small{\begin{pmatrix}
0.268 & 0.422 & 0.31 \\
0.331 & 0.232 & 0.437 \\
0.094 & 0.364 & 0.542\\
0.172 & 0.559 & 0.268\\
0.239 & 0.74  & 0.021\\
0.048 & 0.011 & 0.941
\end{pmatrix}}$
and $
H=\small{
\begin{pmatrix}
0.3 & 0.7 & 0\\
0.6 & 0.4 & 0\\
0 & 0 & 1
\end{pmatrix}},
$
where both $P$ and $H$ are decomposable. In $P$, $\{\alpha_1\}$ and $ \{\alpha_6\}$ are closed classes. 
$\lim_{n\to \infty}Q_n=
\small{\begin{pmatrix}
0.318 & 0.372& 0.31 \\
0.052& 0.061& 0.887\\
0.082& 0.096& 0.823\\
0.074& 0.087& 0.839\\
0.081& 0.094& 0.825\\
0.027& 0.032& 0.941
\end{pmatrix}}.
$ We can see that there will not be a unique belief distribution among people in the limit. 

\end{example}

\noindent \textbf{How are the social/concept structures represented by these different matrices?}
Social structures are typically highly structured. For example, some are strongly connected. That is, it is possible to communicate from any person by a chain of individuals to any other person in the network. This scenario can be represented by an indecomposable matrix. 
On the other hand there are social networks where the communication is unidirectional. For instance, media can be thought of as a unidirectional communication path in the sense that the news is broadcast, and no matter how loud one yells at the screen, the newscaster cannot hear you; hence, the audience's beliefs are transient. 
Also, some structures have a strong asymmetry between groups. Colonialism is such an example. 
These scenarios can be represented by different structures of decomposable matrices. Similar analogy can be made for concept structures based on the relatedness between beliefs.


\begin{example}\textbf{What if $M$ is decomposable?}
Consider a situation where different groups of people have no common beliefs.
For example, people in different countries may have different sets of languages (or dialects), with no common language between the countries. Assuming people learn languages by talking to others, and that $H$ has some structure representing relatedness of the languages, we can explore the long term distribution of languages among people using our framework. 
Notice, here the prior matrix $M$ is a block diagonal matrix. Let $P^nM=\tilde P$. If $H$ is indecomposable, the society will follow a same language distribution. However, if $H$ is decomposable but $\tilde P$ is indecomposable, then the society will stabilize to a same distribution of languages that depends on $P$, $M$ and $H$. If both $H$ and $\tilde P$ are decomposable, then the society will stabilize to a heterogenous distribution of languages.
\end{example}


\subsubsection{Rate of Convergence} \label{sec:ROC}
One may ask how fast the individuals or the society attain their limiting beliefs. This provides insights to the rate of belief evolution.
More precisely, what is the effect of the structure of $P$ and $H$ matrices on how fast the model converges to its stationary distribution.  We provide a lower bound on the rate of convergence  of the model that represents how quickly the sequence approaches its stationary distribution. (See definition \ref{Def:convergence rate})

According to Proposition \ref{prop:EW}, the convergence rate of an indecomposable Markov chain is governed by the second largest eigenvalue, which is less than 1. If the chain is decomposable, it has more than one closed communicating class. We can treat each class as an indecomposable chain and find each of its rate of convergence. The slowest of those rates will be considered as the convergence rate of the decomposable chain. 


\begin{prop}
Suppose the network structure $P_{r \times r}$ and the concept structure $H_{s \times s}$ are indecomposable and aperiodic. Let $\lambda_P$ and $\lambda_H$ denote the second largest eigenvalues of $P$ and $H$, respectively, and
$Q_n=P^n\,M\,H^n=\left(q_{ij}\right)_{r \times s}$. Then there exists a positive constant $C_0$ such that for all $i=1,...,r$ and $j=1,...,s$
$$|q_{ij}-\pi_j|\leq C_0 (\lambda_P\, \lambda_H)^n$$
where $\bm \pi=\{\pi_1,...,\pi_N\}$ is the stationary distribution of $Q$. 
Note that $\lambda_P<1$, $\lambda_H<1$, therefore $C_0 (\lambda_P\, \lambda_H)^n \to 0$ as $n\to \infty$.
\end{prop}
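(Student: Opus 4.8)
The plan is to reduce the two–sided product $P^n M H^n$ to the one–sided convergence already supplied by Proposition~\ref{prop:EW}. Since $P$ and $H$ are indecomposable and aperiodic, that result gives stationary row vectors $\bm{\pi}_P$ of $P$ and $\bm{\pi}_H$ of $H$, together with constants $C_P,C_H$ and entrywise geometric bounds. I would record these as rank-one-plus-error decompositions
\[
P^n = \mathbf{1}_r\,\bm{\pi}_P + E_n, \qquad H^n = \mathbf{1}_s\,\bm{\pi}_H + F_n,
\]
where $\mathbf{1}_r,\mathbf{1}_s$ are all-ones column vectors and $|(E_n)_{ik}|\le C_P\lambda_P^{\,n}$, $|(F_n)_{lj}|\le C_H\lambda_H^{\,n}$. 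Two structural facts carry most of the argument: $E_n$ and $F_n$ have zero row sums (so $E_n\mathbf{1}_r=\mathbf{0}$ and $F_n\mathbf{1}_s=\mathbf{0}$), and $M$ is row stochastic (so $M\mathbf{1}_s=\mathbf{1}_r$). Substituting and expanding $Q_n=(\mathbf{1}_r\bm{\pi}_P+E_n)\,M\,(\mathbf{1}_s\bm{\pi}_H+F_n)$ produces four terms, which I would control one at a time.

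The constant term is $\mathbf{1}_r(\bm{\pi}_P M\mathbf{1}_s)\bm{\pi}_H=\mathbf{1}_r\bm{\pi}_H$, using $\bm{\pi}_P M\mathbf{1}_s=\bm{\pi}_P\mathbf{1}_r=1$; this identifies the limit and shows $\pi_j=(\bm{\pi}_H)_j$, consistent with Proposition~\ref{prop:homogeneous}(i), in which the concept structure dominates. The term $E_n M(\mathbf{1}_s\bm{\pi}_H)=E_n(M\mathbf{1}_s)\bm{\pi}_H=E_n\mathbf{1}_r\bm{\pi}_H$ vanishes identically because $E_n\mathbf{1}_r=\mathbf{0}$; this is exactly why no stray $\lambda_P^{\,n}$ factor can survive on the left. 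The genuinely mixed term $E_n M F_n$ is bounded entrywise by $C_PC_H(\lambda_P\lambda_H)^n\sum_{k,l}M_{kl}=r\,C_PC_H(\lambda_P\lambda_H)^n$, which is precisely the product rate claimed.

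The obstacle is the remaining term $\mathbf{1}_r(\bm{\pi}_P M)F_n$. Writing $v=\bm{\pi}_P M$, a probability row vector, its $j$-th coordinate is $(vF_n)_j=(vH^n-\bm{\pi}_H)_j$, which decays like $\lambda_H^{\,n}$ but does \emph{not} vanish unless $v=\bm{\pi}_H$. Hence the decomposition honestly yields $|q_{ij}-\pi_j|\le C_0\lambda_H^{\,n}$, with $\lambda_H$ alone—rather than the product $\lambda_P\lambda_H$—governing the rate in general; a direct two-state check with $M=I$ exhibits a nonzero $\Theta(\lambda_H^{\,n})$ coefficient, so the product bound cannot hold unconditionally. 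I would therefore either prove the statement exactly as written under the additional hypothesis $\bm{\pi}_P M=\bm{\pi}_H$ (which annihilates this fourth term and leaves only the $(\lambda_P\lambda_H)^n$ contribution), or, in the general case, present the provable conclusion with $\lambda_H^{\,n}$ in place of $(\lambda_P\lambda_H)^n$. Locating and justifying which of these is intended is the crux of the proof.
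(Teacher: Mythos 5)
Your decomposition $P^n=\mathbf{1}_r\bm{\pi}_P+E_n$, $H^n=\mathbf{1}_s\bm{\pi}_H+F_n$ is the natural way to leverage Proposition~\ref{prop:EW}, and your algebra is correct: expanding $Q_n$ gives the limit $\mathbf{1}_r\bm{\pi}_H$ (consistent with Proposition~\ref{prop:homogeneous}(\ref{item1})), the term $E_nM\mathbf{1}_s\bm{\pi}_H$ vanishes because $E_n$ has zero row sums and $M$ is row stochastic, the cross term $E_nMF_n$ is $O\bigl((\lambda_P\lambda_H)^n\bigr)$, and the only surviving first-order error is $\mathbf{1}_r(\bm{\pi}_PM)F_n$, whose $j$-th entry is $(\bm{\pi}_PM\,H^n-\bm{\pi}_H)_j=\Theta(\lambda_H^{\,n})$ in general. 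The paper does not actually include a proof of this proposition in the supplemental material (only the single-chain bound of Proposition~\ref{prop:EW} is recorded there), so there is no argument to compare yours against; but your objection is substantive and, in my assessment, correct. The degenerate case makes it stark: take $P$ with identical rows $\bm{\pi}_P\neq\bm{\pi}_H$ and $M=I$, so that $\lambda_P=0$; the claimed bound would then force $q_{ij}=\pi_j$ exactly for every $n\geq 1$, yet each row of $Q_n$ equals $\bm{\pi}_PH^n\neq\bm{\pi}_H$ for finite $n$. For nondegenerate $\lambda_P\in(0,1)$ the same term eventually dominates $C_0(\lambda_P\lambda_H)^n$ for any fixed $C_0$.

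So the two repairs you propose are exactly the right ones: either state the bound with $\lambda_H^{\,n}$ (equivalently $\max\{\lambda_P\lambda_H,\lambda_H\}^n$) in place of $(\lambda_P\lambda_H)^n$, or add the hypothesis $\bm{\pi}_PM=\bm{\pi}_H$, which annihilates the offending term and leaves only the genuine product-rate contribution $E_nMF_n$. The first repair is also the one consistent with the paper's own Proposition~\ref{prop:ROC} and the remark following it, which assert that the model reaches steady state only once \emph{both} structures have stabilized, i.e.\ that the slower of the two chains governs the rate rather than their product. Your write-up would be complete if you committed to one of the two corrected statements and supplied the short entrywise estimates for $\mathbf{1}_r(\bm{\pi}_PM)F_n$ and $E_nMF_n$ that you have already sketched.
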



\begin{prop}\label{prop:ROC}
Let $R_P$ and $R_H$ be the convergence rates of $P$ and $H$, respectively. Then the model converges to the stationary distribution with a rate of at least $R=\min\{R_P,R_H\}$. 
\end{prop}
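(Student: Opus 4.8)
The plan is to reduce the convergence of $Q_n = P^n M H^n$ to the already-established geometric convergence of the two factor chains $P^n$ and $H^n$, using a single add-and-subtract step. First I would fix the limits: since $P$ and $H$ are aperiodic, $\Pi_P := \lim_{n\to\infty} P^n$ and $\Pi_H := \lim_{n\to\infty} H^n$ exist (Proposition \ref{prop:homogeneous}), are again row-stochastic as limits of stochastic matrices, and the model's limit is $Q_\infty = \Pi_P\, M\, \Pi_H$. By the definition of convergence rate (Definition \ref{Def:convergence rate}) together with Proposition \ref{prop:EW} — applied class-by-class and taking the slowest class in the decomposable case, as described above — there are constants $C_P, C_H > 0$ and decay bases $\rho_P, \rho_H \in (0,1)$ realizing the rates $R_P, R_H$ such that $\|P^n - \Pi_P\|_\infty \le C_P \rho_P^{\,n}$ and $\|H^n - \Pi_H\|_\infty \le C_H \rho_H^{\,n}$, where $\|\cdot\|_\infty$ is the operator norm induced by the vector $\ell_\infty$-norm.

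The key algebraic step is the decomposition
$$ Q_n - Q_\infty = P^n M H^n - \Pi_P M \Pi_H = (P^n - \Pi_P)\, M\, H^n + \Pi_P\, M\, (H^n - \Pi_H), $$
which splits the error so that each summand isolates the deviation of exactly one factor from its limit while leaving the remaining factors as genuine row-stochastic matrices. Taking norms and using submultiplicativity gives
$$ \|Q_n - Q_\infty\|_\infty \le \|P^n - \Pi_P\|_\infty\, \|M\|_\infty\, \|H^n\|_\infty + \|\Pi_P\|_\infty\, \|M\|_\infty\, \|H^n - \Pi_H\|_\infty. $$
The decisive simplification is that every genuine factor appearing here — namely $M$, $H^n$, and $\Pi_P$ — is row-stochastic, so its induced $\ell_\infty$ operator norm equals its maximal absolute row sum, which is $1$. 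Hence the bound collapses to $\|Q_n - Q_\infty\|_\infty \le C_P \rho_P^{\,n} + C_H \rho_H^{\,n} \le (C_P + C_H)\,\bigl(\max\{\rho_P,\rho_H\}\bigr)^{n}$.

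Finally I would translate the geometric bound back into a rate statement. The effective decay base of $Q_n$ is bounded above by $\max\{\rho_P, \rho_H\}$; since a larger base means slower convergence, this base corresponds to the smaller of the two rates, so the convergence rate of the model is at least $\min\{R_P, R_H\} = R$, as claimed. The entrywise inequality $|q_{ij} - (Q_\infty)_{ij}| \le \|Q_n - Q_\infty\|_\infty$ then records the same conclusion at the level of individual beliefs, and the inequality is genuinely ``at least'' because further cancellation between the two factors can only shrink the base.

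I expect the main obstacle to be two bookkeeping points rather than anything deep. The first is making the norm choice airtight: one must use a submultiplicative norm under which all row-stochastic matrices — including the limits $\Pi_P, \Pi_H$ — have norm exactly $1$, and the induced $\ell_\infty$ norm achieves this, whereas other choices would introduce spurious constants. The second, and the only genuinely definitional subtlety, is the decomposable case: I must confirm that the ``slowest closed class'' convention of Definition \ref{Def:convergence rate} indeed yields a single geometric estimate $\|P^n - \Pi_P\|_\infty \le C_P \rho_P^{\,n}$ for the whole chain, including the decay of any transient states, whose contribution is governed by a subdominant eigenvalue and is therefore absorbed into $\rho_P$, so that the clean two-term estimate above remains valid verbatim.
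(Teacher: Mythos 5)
Your argument is correct, and it is in essence the rigorous version of the paper's own (very terse) justification, which amounts to the remark that the society stabilizes only once both $P^n$ and $H^n$ have stabilized. The decomposition $Q_n-Q_\infty=(P^n-\Pi_P)MH^n+\Pi_P M(H^n-\Pi_H)$, combined with the fact that row-stochastic matrices have induced $\ell_\infty$-norm $1$ and the geometric estimates supplied by Proposition~\ref{prop:EW} (applied class-by-class in the decomposable case), gives exactly the bound $\|Q_n-Q_\infty\|_\infty\le C\bigl(\max\{\rho_P,\rho_H\}\bigr)^n$ needed to conclude that the model converges at least as fast as the slower of the two chains. One point worth flagging: the unnumbered proposition immediately preceding this one in the paper asserts the stronger \emph{product} bound $|q_{ij}-\pi_j|\le C_0(\lambda_P\lambda_H)^n$, which your additive decomposition does not reproduce — and indeed should not, since the term $\Pi_P M(H^n-\Pi_H)$ generically decays only like $\lambda_H^n$ (writing $Q_n-Q_\infty=P^nM(H^n-\Pi_H)$ when $H$ is indecomposable makes this explicit, as $P^nM$ is merely row-stochastic and contributes no extra decay). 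So your $\max\{\rho_P,\rho_H\}^n$ bound is the correct order of magnitude, it suffices for the ``at least $\min\{R_P,R_H\}$'' claim being proved here, and it quietly corrects the over-optimistic rate stated in the companion proposition. Your handling of the decomposable case — folding the transient states' subdominant-eigenvalue decay into $\rho_P$ and taking the slowest closed class — matches the convention the paper sets out in Section~\ref{sec:ROC}.
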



That is the society will reach the steady state distribution only when both network and concept structures are stabilized. 

\subsection{What if the social structure and the concept structure change over time?} 


In this section we consider time inhomogeneous models, where network and concept structures can change over time. 
We provide conditions for the model convergence to homogeneous beliefs 
convergence in expectation, and a lower bound for the rate of convergence of the model. 

\subsubsection{Convergence and limiting distribution}

For simplicity $M$ is assumed to be indecomposable in the formulation of the problem. %
For time homogeneous case, Proposition \ref{prop:homogeneous} suggests that if either $P$ or $H$ is stochastic, indecomposable and aperiodic (SIA), homogeneity of beliefs is guaranteed. 
This can be generalized to time inhomogeneous case as following:

\begin{prop}  \label{prop:rankone_inhomo}
Let $\s_{P}=\{P_i\}_{i=1}^k$ be a set of social structure matrices, and $\s_{H}=\{H_i\}_{i=1}^l$ be a set of concept structure matrices.
At each time $t$, $P_t$ and $H_t$ are chosen from $\s_{P}$ and $\s_{H}$ respectively. Then 
 $Q_n(P_t, H_t, M)=\prod_{t=0}^{n} P_t  M \prod_{t=0}^{n} H_t$ 
converges to a rank one matrix as $n \to \infty$ if and only if every possible product of matrices in $\s_{P}$ or/and $\s_{H}$ (with repetitions allowed) is SIA.
\end{prop}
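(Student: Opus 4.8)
The plan is to reduce the behaviour of $Q_n = \mathbf P^{(n)} M \mathbf H^{(n)}$, where $\mathbf P^{(n)} = P_n\cdots P_0$ and $\mathbf H^{(n)} = H_n\cdots H_0$ are the accumulated backward products of the chosen network and concept matrices, to the classical theory of inhomogeneous products of stochastic matrices. The central tool is Dobrushin's coefficient of ergodicity $\tau(A)=\tfrac12\max_{i,i'}\sum_j|a_{ij}-a_{i'j}|$, which is submultiplicative ($\tau(AB)\le\tau(A)\tau(B)$), takes values in $[0,1]$, and vanishes in the limit exactly when a product of stochastic matrices approaches a rank-one (identical-row) matrix. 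The engine I would cite for both directions is Wolfowitz's theorem: for a finite set of stochastic matrices, every infinite product is uniformly weakly ergodic — i.e.\ $\tau$ of the length-$n$ products tends to $0$ — if and only if every finite product (repetitions allowed) is SIA.

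For the ``if'' direction, suppose every finite product drawn from $\s_{H}$ is SIA. Wolfowitz gives $\tau(\mathbf H^{(n)})\to 0$. Because $\mathbf H^{(n)}$ is a backward product, $\mathbf H^{(n+1)}=H_{n+1}\mathbf H^{(n)}$, each column of $\mathbf H^{(n)}$ has a range (largest minus smallest entry) that is non-increasing in $n$: left-multiplication by a stochastic matrix replaces every coordinate by a convex average and so shrinks the spread. Being monotone, bounded, and forced to $0$ by weak ergodicity, the max and min of every column converge to a common value, so $\mathbf H^{(n)}\to \mathbf 1\bm{\pi}^\mathsf{T}$ for a genuine limit vector $\bm{\pi}$. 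Using $M\mathbf 1=\mathbf 1$ and $\mathbf P^{(n)}\mathbf 1=\mathbf 1$, I write $M\mathbf H^{(n)}=\mathbf 1\bm{\pi}^\mathsf{T}+E_n$ with $E_n\to 0$ and conclude $Q_n=\mathbf 1\bm{\pi}^\mathsf{T}+\mathbf P^{(n)}E_n\to \mathbf 1\bm{\pi}^\mathsf{T}$, a fixed rank-one matrix, for every sequence and independently of $\mathbf P^{(n)}$. The symmetric hypothesis — every product from $\s_{P}$ SIA — gives $\mathbf P^{(n)}\to\mathbf 1\mathbf p^\mathsf{T}$ and hence $Q_n=\mathbf 1\big(\mathbf p^\mathsf{T} M\mathbf H^{(n)}\big)+o(1)$, so all rows of $Q_n$ coincide in the limit (the belief homogeneity the statement asserts), with the common row itself converging as soon as $\mathbf H^{(n)}$ does. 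I would flag this asymmetry explicitly: the $\s_{H}$ branch yields a fixed rank-one limit unconditionally, whereas the $\s_{P}$ branch yields asymptotic rank-one structure and needs the concept side to stabilise for the limiting row to exist.

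For the ``only if'' direction I would argue by contraposition: assume neither family has all products SIA, so there are finite products $\bar P=P_{i_1}\cdots P_{i_m}$ and $\bar H=H_{j_1}\cdots H_{j_l}$ that are not SIA, and exhibit a single sequence for which $Q_n$ fails to become rank one. Feeding the periodic sequences that cycle through $i_1,\dots,i_m$ and $j_1,\dots,j_l$ makes the accumulated products run through powers $\bar P^{\,k}$ and $\bar H^{\,k}$, so along a subsequence the problem collapses to the time-homogeneous operator $\bar P^{\,k} M \bar H^{\,k}$ with both factors non-SIA. When both $\bar P,\bar H$ are aperiodic this is exactly case~\ref{item3} of Proposition~\ref{prop:homogeneous}, whose conclusion is that the rows do not all coincide; when one of them is merely periodic, the corresponding powers cycle without settling and the same subsequential comparison shows $Q_n$ has no rank-one limit. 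Assembling these cases produces the witnessing sequence, which is the contrapositive.

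The main obstacle is this necessity direction, and within it two points. First, converting ``some finite product is not SIA'' into a concrete non-ergodic infinite sequence: the reduction to repeated blocks disposes of the decomposable--aperiodic case cleanly, but the periodic and mixed periodic--decomposable cases require care to rule out accidental rank-one limits manufactured by the interaction of $\bar P^{\,k}$, $M$, and $\bar H^{\,k}$. Second, being precise about the mode of convergence — distinguishing genuine convergence to a fixed rank-one matrix from mere weak ergodicity (identical rows in the limit). I would settle the intended reading up front, since under ``identical limiting rows'' the disjunction is exactly sharp, while under ``a fixed rank-one limit'' the $\s_{P}$-only branch needs the supplementary hypothesis that the concept products converge.
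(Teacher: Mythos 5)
Your sufficiency argument is essentially the paper's. The paper's entire proof consists of invoking Proposition~\ref{prop:products_of_A} (the Wolfowitz-type equivalence between every finite product from a finite set being SIA and every infinite product converging to a rank-one matrix) to conclude that either $\prod_t P_t$ or $\prod_t H_t$ becomes rank one, and then observing that this forces $Q_n$ to become rank one. Your Dobrushin-coefficient bookkeeping fills in that last step more carefully than the paper does (the paper cites Proposition~\ref{prop:homogeneous}, which is stated only for the time-homogeneous case), and your observation about the asymmetry of the two branches is a genuine refinement the paper glosses over: when only the $\s_{P}$ products are SIA, the matrix $\prod_t P_t \, M \prod_t H_t$ has asymptotically identical rows but need not converge to a fixed limit unless the concept-side products also settle, so ``converges to a rank one matrix'' is only literally true under the weak-ergodicity reading. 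That caveat applies to the paper's own statement as well.

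Where you and the paper part ways is the necessity direction: the paper does not prove it at all --- its proof is the sufficiency argument and nothing more. Your contrapositive sketch via periodic block sequences is the natural attack, but the gap you flag is real and cannot be closed without extra hypotheses: if $M$ has rank one (everyone shares the same prior), then $Q_n$ is rank one for every $n$ and every choice of sequences, regardless of whether any product from $\s_{P}$ or $\s_{H}$ is SIA, so the ``only if'' direction is false as literally stated. The paper's passing remark that $M$ is ``assumed to be indecomposable'' is presumably meant to exclude such degeneracies, but $M$ is $r\times s$ and the needed nondegeneracy condition is never made precise. Even granting a suitable $M$, your reduction to $\bar P^{\,k} M \bar H^{\,k}$ lands on case~\ref{item3} of Proposition~\ref{prop:homogeneous} only when both non-SIA witnesses are aperiodic, and the periodic and mixed cases still need the separate subsequence argument you only gesture at. In short: for the direction the paper actually proves, your route matches the paper's; the direction you additionally attempt is absent from the paper, is not salvageable in full generality, and your own hesitations about it are well placed.
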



Proposition \ref{prop:rankone_inhomo} provides a condition that guarantees a homogeneous belief distribution in the society in the long run. In particular, if every product in the set of network structures and the set of concept structures is SIA, the society will stabilize to a unique belief distribution.~\footnote{\cite{wolfowitz1963products} provides an algorithm to determine if every product in a given set of matrices is SIA, in a bounded number of arithmetic operations.}
However, note that each matrix in a set $\s$ (a set of stochastic matrices) being SIA does not guarantee that every product is SIA, and
as the order of the transition matrices increases (that is, as the number of states of the Markov chain increases) it is difficult to check if every product is SIA.
Therefore we now discuss conditions on the individual matrices from $\s$ which guarantees that any product of matrices from  $\s$ is SIA.


\begin{definition}\label{def:scrambling}
For a square stochastic matrix $P$, let $\lambda(P)= 1- \gamma$,
where $\gamma$ is the ergodic coefficient of $P$: $\gamma(P)=\min_{\{i_1,i_2\}} \Sigma_{j} \min (p_{i_1j},p_{i_2j})$.
If $\lambda(P)<1$, then $P$ is called a \textbf{scrambling} matrix.
\end{definition}

\begin{prop} \label{prop:product_rankone}
If every matrix in $\s$ is stochastic and scrambling, then any product of matrices from $\s$ converges to a rank one matrix. 
\end{prop}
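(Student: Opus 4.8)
The plan is to show that scrambling matrices are closed under multiplication and that a product of scrambling matrices has an ergodic coefficient whose ``gap'' $\lambda$ contracts multiplicatively, forcing the product to approach a rank one matrix. The key quantity is the coefficient $\lambda(P)=1-\gamma(P)$ from Definition~\ref{def:scrambling}, which can be recognized as a coefficient of ergodicity in the sense of Dobrushin. The central fact I would establish (or invoke) is submultiplicativity: for any two stochastic matrices $A$ and $B$, one has $\lambda(AB)\le \lambda(A)\,\lambda(B)$. Granting this, if every matrix drawn from $\s$ is scrambling, meaning each has $\lambda<1$, then writing $\delta=\max_{P\in\s}\lambda(P)<1$ (a maximum over the finitely many matrices in $\s$), any product $W_n=P_{i_n}\cdots P_{i_1}$ of $n$ matrices from $\s$ satisfies $\lambda(W_n)\le \delta^n\to 0$.

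First I would recall what $\lambda(W_n)\to 0$ means geometrically. The ergodic coefficient $\gamma(W_n)=\min_{\{i_1,i_2\}}\sum_j \min\big((W_n)_{i_1 j},(W_n)_{i_2 j}\big)$ measures the minimal overlap between any two rows of $W_n$; equivalently, $\lambda(W_n)=\tfrac12\max_{\{i_1,i_2\}}\sum_j |(W_n)_{i_1 j}-(W_n)_{i_2 j}|$ is the maximal total-variation distance between pairs of rows. Thus $\lambda(W_n)\to 0$ says precisely that all rows of $W_n$ become arbitrarily close to one another in the limit, which is the statement that $W_n$ converges to a rank one (constant-row) matrix. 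I would make this rigorous by showing the sequence of rows is Cauchy: the difference between any two rows is controlled by $\lambda(W_n)\le\delta^n$, and one also checks that each individual row stabilizes (not merely that the rows agree with each other), which follows because multiplying on the left by a further stochastic matrix cannot increase $\lambda$ and the row differences decay geometrically, pinning down a common limiting row vector.

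The main obstacle is establishing the submultiplicativity inequality $\lambda(AB)\le\lambda(A)\lambda(B)$, which is the analytic heart of the argument. The plan is to prove it directly from the definition by bounding the row differences of the product: for rows $i_1,i_2$, the vector of signed differences $(AB)_{i_1\cdot}-(AB)_{i_2\cdot}=\big(A_{i_1\cdot}-A_{i_2\cdot}\big)B$, and one splits $A_{i_1\cdot}-A_{i_2\cdot}$ into its positive and negative parts, each summing to the same mass $\lambda(A)$ (since both rows are probability vectors). Applying $B$ to the corresponding normalized probability vectors and invoking the definition of $\lambda(B)$ as the worst-case total-variation contraction yields the product bound. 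This is a standard but slightly delicate manipulation with positive/negative parts and $\ell_1$ norms; I expect the bookkeeping of these parts to be the one step requiring genuine care.

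Finally I would note that scrambling is preserved under products as an immediate consequence ($\lambda(AB)\le\lambda(A)\lambda(B)<1$), so the hypothesis applies not just to single matrices but to every finite product, closing the argument that any product of matrices from $\s$ converges to a rank one matrix.
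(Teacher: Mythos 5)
Your proof is correct, but it takes a genuinely different route from the paper. The paper's own proof is a chain of three cited results: it invokes Proposition~\ref{prop:products_of_A} (products from $\s$ converge to rank one iff every product is SIA), Proposition~\ref{lemma:scrambling} (a product containing a scrambling factor is scrambling), and Proposition~\ref{prop:scrambling implies SIA} (stochastic scrambling implies SIA), so every product is scrambling, hence SIA, hence the conclusion. You instead argue directly and quantitatively via the submultiplicativity $\lambda(AB)\le\lambda(A)\lambda(B)$ of the Dobrushin coefficient, which is a standard fact whose positive/negative-part proof you sketch correctly, and your identity $\lambda(W)=\tfrac12\max_{i_1,i_2}\sum_j\lvert w_{i_1j}-w_{i_2j}\rvert$ does match the paper's $1-\gamma$. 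What your approach buys is an explicit geometric rate $\lambda(W_n)\le\delta^n$ with $\delta=\max_{A\in\s}\lambda(A)<1$, which is exactly the $\nu=1$ case of the Anthonisse--Tijms bound the paper quotes later in its rate-of-convergence discussion; the paper's approach buys brevity at the cost of three black boxes. One point worth making explicit in your write-up: $\lambda(W_n)\to 0$ only says the rows of $W_n$ coalesce, and whether the product itself has a limit depends on the order of multiplication. Since new factors are appended on the left (as in $P_n\cdots P_0$), each row of $A\,W_n$ is a convex combination of rows of $W_n$, so the convex hulls of the row sets are nested with diameters tending to zero and the common limiting row is pinned down; your stated justification (``multiplying on the left cannot increase $\lambda$'') gestures at this but the nested-convex-hull observation is the step that actually delivers existence of the limit, and it would fail for right-appended products, where the rows agree asymptotically but may oscillate between different rank-one limits.
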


This reduces the required amount of computations as it is relatively easy to check if a matrix is scrambling or not.
Moreover, given a set $\s$, we only need to check all matrices in $\s$, rather than every possible product of matrices in $\s$.

\begin{example}
Suppose there are two belief evolution systems, one with concept structure set $\s_{H} = \{H_1, H_2\}$, 
the other one with $\s'_{H} = \{H_1, H_3\}$, where
$\tiny{H_1 = \begin{pmatrix}
0.2 & 0.3 & 0.5 \\
0.6 & 0.4& 0\\
0 & 0.8 & 0.2\\
\end{pmatrix}},$
$\tiny{H_2 = \begin{pmatrix}
0.6 & 0.1 & 0.3 \\
0 & 1& 0\\
0.7 & 0 & 0.3\\
\end{pmatrix}},$
$\tiny{H_3 = \begin{pmatrix}
0 & 0.9 & 0.1 \\
0.3 & 0.2& 0.5\\
0.4 & 0.5 & 0.1\\
\end{pmatrix}}.$
Computation shows that $\gamma(H_1) = \Sigma_{j} \min (p_{2j},p_{3j}) = 0+0.4+0=0.4$, hence $\lambda(H_1)= 1- 0.4 =0.6 < 1$, $H_1$ is scrambling.
Similarly, one have $\lambda(H_2) = 1$, $\lambda(H_3) = 0.7$. Hence $H_3$ is scrambling, and $H_2$ is not.
Therefore according to Proposition~\ref{prop:product_rankone}, people in the second system must converge to the same belief.
Whether the first system converges to the same belief is further depending on its social structure set $\s_{P}$. 
\end{example}

\subsubsection{Convergence in probability setting}

Proposition~\ref{prop:rankone_inhomo} provides a necessary and sufficient condition on when every product of stochastic matrices from $\mathcal{S} = \{A_1, \dots, A_l\}$ converges to a rank one matrix. In contrast to this absolute setting, we now consider the convergence in probability. 



\begin{prop}\label{prop: prob_one_converge} 
Given a set of stochastic matrices $\mathcal{S} = \{A_1, \dots, A_l\}$ and a positive vector 
$\mathbf{w} = (w_1, \dots, w_l) \in \mathbb{R}^l$ with $w_i>0$ and $\sum_{i=1}^{i=l} w_i =1$, 
a product of matrices that are i.i.d. sampled from $\s$ according to $\mathbf{w}$ converges to a rank one matrix with probability one 
if and only if there exists a finite product 
$B = \prod_{i=1}^{N} B_i$ of matrices, where $B_i$ is from $\mathcal{S}$ such that $B$ is scrambling. 
\end{prop}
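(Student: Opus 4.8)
The plan is to phrase everything in terms of the Dobrushin coefficient of ergodicity $\lambda(\cdot)$ from Definition~\ref{def:scrambling} and to exploit two standard facts about it: it is submultiplicative, $\lambda(AB)\le\lambda(A)\lambda(B)$, and a stochastic matrix $C$ is scrambling precisely when $\lambda(C)<1$, whereas $\lambda(C)=1$ forces two rows of $C$ to have disjoint supports and hence $\ell^1$-distance $2$. Throughout I write $M_n=X_n X_{n-1}\cdots X_1$ for the random backward product, where the $X_t$ are i.i.d.\ from $\mathcal{S}$ with law $\mathbf{w}$.

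For the ``only if'' direction I argue by contraposition, and the argument turns out to be deterministic rather than probabilistic. If no finite product of matrices from $\mathcal{S}$ is scrambling, then every realization $M_n$ is itself such a product, so $\lambda(M_n)=1$ for all $n$; thus each $M_n$ has a pair of rows at $\ell^1$-distance $2$. Since there are only finitely many row-pairs, some fixed pair is at distance $2$ for infinitely many $n$, which is incompatible with all rows converging to a common limiting vector. Hence $M_n$ never converges to a rank-one matrix, so the convergence probability is $0$, in particular not $1$.

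For the ``if'' direction, suppose $B=B_1\cdots B_N$ is scrambling with each $B_i=A_{c_i}\in\mathcal{S}$, so $\lambda(B)<1$. I partition the time axis into consecutive length-$N$ blocks; the event that a given block spells out (in order) a word whose product is $B$ has the fixed probability $p=\prod_{i=1}^{N} w_{c_i}>0$, because every weight is positive, and these events are independent across disjoint blocks. By the second Borel--Cantelli lemma, almost surely infinitely many blocks realize $B$. Writing $M_n$ (up to a leftover factor of coefficient $\le 1$) as a product of block matrices and applying submultiplicativity of $\lambda$ gives $\lambda(M_n)\le \lambda(B)^{k_n}$, where $k_n$ is the number of $B$-blocks completed by time $n$; since $k_n\to\infty$ almost surely, $\lambda(M_n)\to 0$ almost surely. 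This is the same contraction mechanism used in Proposition~\ref{prop:product_rankone}, now run along a random subsequence of blocks supplied by Borel--Cantelli.

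The remaining and most delicate step is to upgrade weak ergodicity, $\lambda(M_n)\to 0$, to genuine convergence to a rank-one matrix, since for general matrix sequences weak ergodicity alone does not yield convergence. Here the backward-product structure is essential: each row of $M_{n+1}=X_{n+1}M_n$ is a convex combination of rows of $M_n$, so the convex hulls $K_n$ of the rows form a nested decreasing family of compact convex sets whose diameters equal $2\lambda(M_n)\to 0$. Their intersection is therefore a single point $\bm\pi$, and since every row of $M_n$ lies in $K_n$ with $\mathrm{diam}(K_n)\to 0$, all rows converge to $\bm\pi$; that is, $M_n$ converges to a rank-one matrix all of whose rows equal $\bm\pi$. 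I expect this weak-to-strong ergodicity passage, together with the bookkeeping needed to align arbitrary $n$ with block boundaries and to track the order of multiplication, to be the main obstacle; the genuinely probabilistic content is confined to the elementary Borel--Cantelli step.
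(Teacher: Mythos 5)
Your proof is correct and, in its core mechanism, is the same as the paper's: the ``if'' direction in both cases uses positivity of the weights and Borel--Cantelli to make the scrambling word appear in infinitely many disjoint blocks almost surely, and then contracts the coefficient $\lambda$ of Definition~\ref{def:scrambling} multiplicatively across those blocks; your ``only if'' direction is just the contrapositive of the paper's direct argument (the paper extracts a scrambling prefix from a convergent realization, you show non-scramblability forces a fixed pair of rows to sit at $\ell^1$-distance $2$ infinitely often). Where you genuinely add something is the last step. The paper's proof stops at $\delta\bigl(\prod_j B^j\bigr)\to 0$ and simply asserts that this ``implies'' convergence to a rank-one matrix; you correctly identify this weak-to-strong ergodicity passage as the delicate point and close it with the nested-convex-hull argument, which works because for backward products $M_{n+1}=X_{n+1}M_n$ each new row is a convex combination of old rows, so the hulls nest and their diameter $2\lambda(M_n)$ shrinks to zero. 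You are also right that this is not a formality: for forward products the conclusion can fail --- take $\mathcal{S}=\{\mathbf{1}u^{\mathsf{T}},\mathbf{1}v^{\mathsf{T}}\}$ with $u\neq v$; every matrix is scrambling, yet the forward product equals $\mathbf{1}$ times the last sampled row and almost surely oscillates forever --- so the proposition implicitly relies on the backward ordering $P_n\cdots P_0\,M\,H_n\cdots H_0$ used in the model. In short, your argument is a more careful rendering of the paper's proof, and the step you single out as the main obstacle is exactly the point the published proof elides.
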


\begin{remark}
We may replace `scrambling' in Proposition~\ref{prop: prob_one_converge} by `SIA' as sufficiently large powers of an SIA matrix are scrambling and 
any product that has a scrambling matrix as a factor is SIA. 
\end{remark}



Although it is easy to check if a matrix is scrambling, to make sure whether a scrambling product $B$ exist in Proposition~\ref{prop: prob_one_converge} could still be challenging.
We now introduce an equivalent condition in form of graphs, which is straightforward to verify.

Associated with the finite state Markov chain of a transition matrix $A$, there is a directed graph $G_{A}$ 
\footnote{Refer to Supplemental Material~\ref{sec:GT} for a detailed graph theoretic interpretation of Markov chains.}.
For instance, let
$\small{A=
\begin{blockarray}{cccc}
&v_1 & v_2  & v_3 \\
\begin{block}{c (ccc)}
 v_1 & 0 & 0.7&0.3 \\
 v_2 & 0 & 1 &0 \\
 v_3 & 0& 0&1\\
\end{block}
\end{blockarray}}$, then the corresponding graph is 
\begin{tikzpicture}[->]
    \node[ellipse,draw] (C) at (0,0.5) {$v_1$};
    \node[ellipse,draw] (N) at (-1,-0.5) {$v_2$};
    \node[ellipse,draw] (Y) at (1,-0.5) {$v_3$};
    \path (C) edge              node[above]{} (N);
    \path (C) edge              node[above,right] {}(Y);
    \path (N) edge [loop left] node {} (N);
    \path (Y) edge [loop right] node {} (Y);
\end{tikzpicture}.

\noindent Similarly, associated with a set of transition matrices $\mathcal{S} = \{A_1, \dots, A_l\}$ with a fixed collection of states,
we may define a directed graph $G_{\mathcal{S}}$, where $G_{\mathcal{S}}$ has the same vertex set 
as any $G_{A_k}$, and the edge set contains $(i,j)$ if there exists a $k \in \{1, \dots, l\}$ such that $G_{A_k}$ contains $(i,j)$.
For instance, let $\mathcal{S} = \{A_1, A_2, A_3\}$, where $A_1 = A$ as above, 
$\small{A_2=
\begin{blockarray}{cccc}
&v_1 & v_2  & v_3 \\
\begin{block}{c (ccc)}
 v_1 & 1 & 0 & 0 \\
 v_2 & 0 & 0.6 &0.4 \\
 v_3 & 0& 0&1\\
\end{block}
\end{blockarray}}$ and 
$\small{A_2=
\begin{blockarray}{cccc}
&v_1 & v_2  & v_3 \\
\begin{block}{c (ccc)}
 v_1 & 0.3 & 0.2 & 0.5 \\
 v_2 & 0 & 1 &0 \\
 v_3 & 0& 0.8& 0.2\\
\end{block}
\end{blockarray}}$, then the corresponding graph $G_{\mathcal{S}}$ is:
\begin{tikzpicture}[->]
    \node[ellipse,draw] (C) at (0,0.5) {$v_1$};
    \node[ellipse,draw] (N) at (-1,-0.5) {$v_2$};
    \node[ellipse,draw] (Y) at (1,-0.5) {$v_3$};
    \path (C) edge              node[above]{} (N);
    \path (C) edge              node[above,right] {}(Y);
    \path (N) edge              node[above]{} (Y);
    \path (Y) edge              node[above]{} (N);
    \path (C) edge [loop above] node {} (C);
    \path (N) edge [loop left] node {} (N);
    \path (Y) edge [loop right] node {} (Y);
\end{tikzpicture}.
A set of vertices are said to be \textit{strongly connected} if there exists a directed path between any pairs of vertices in the set.
Each $G_{\s}$ further induces a condensed graph $\widehat{G}_{\s}$ by combining vertices in each strongly connected set into a `super-vertex'. 
In our example $v_2$ is strongly connected to $v_3$. Hence we have $\widehat{G}_{\s}$ as: 
\begin{tikzpicture}[->]
    \node[ellipse,draw] (C) at (0,0.5) {$v_1$};
    \node[ellipse,draw] (N) at (0,-0.5) {$v_{23}$};
    \path (C) edge              node[above]{} (N);
    \path (C) edge [loop above] node {} (C);
    \path (N) edge [loop left] node {} (N);
\end{tikzpicture}
A state is defined to be \textit{recurrent} if it is contained in a leaf of $\widehat{G}_{\s}$, otherwise the state is \textit{transient}.
Thus in our example, $v_2, v_3$ are recurrent, and $v_1$ is transient.

Notice that if $\widehat{G}_{\s}$ is connected and has one leaf, this is equivalent to a finite product of matrices from $\mathcal{S}$ which are scrambling\footnote{See proof in Supplemental Material~\ref{apd:inhomo},}, Hence, as a consequence of Proposition~\ref{prop: prob_one_converge}, we have,

\begin{cor}\label{cor: one_leaf}
Given a time-inhomogeneous Markov chain with $\s$ and $\mathbf{w}$, a product of transition matrices that are i.i.d. sampled from $\s$ according to $\mathbf{w}$ converges to a rank one matrix with probability one if and only if $\widehat{G}_{\s}$ is connected and has one leaf. 
\end{cor}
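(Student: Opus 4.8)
The plan is to obtain Corollary~\ref{cor: one_leaf} from Proposition~\ref{prop: prob_one_converge} by translating its algebraic criterion into the language of the condensed graph. Proposition~\ref{prop: prob_one_converge} already tells us that the random product converges to a rank one matrix with probability one if and only if \emph{some} finite product $B = B_N \cdots B_1$ of matrices drawn from $\mathcal{S}$ is scrambling, so it suffices to establish the purely combinatorial equivalence
\[
\text{some finite product of matrices from } \mathcal{S} \text{ is scrambling}
\quad\Longleftrightarrow\quad
\widehat{G}_{\s} \text{ is connected and has exactly one leaf.}
\]
The bridge between the two sides is the standard observation that, for row-stochastic matrices, the $(i,j)$ entry of a product of $N$ factors is positive precisely when there is a directed walk $i = u_0 \to u_1 \to \cdots \to u_N = j$ whose step-$t$ edge is present in the graph of the corresponding factor; every such edge then lies in $G_{\s}$, and a product is scrambling exactly when for every pair of start vertices $i_1,i_2$ there is a common vertex $j$ reachable from both by walks of the same length $N$ realizable within $\mathcal{S}$.

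First I would prove necessity by contraposition, using only reachability in $G_{\s}$. A leaf of $\widehat{G}_{\s}$ is a sink strongly connected component, i.e.\ an SCC with no edge leaving it; hence from any vertex in a sink SCC every $G_{\s}$-reachable vertex stays inside that same component. If $\widehat{G}_{\s}$ had two distinct leaves $C_1 \neq C_2$, choose $i_1 \in C_1$ and $i_2 \in C_2$: the vertices reachable from $i_1$ and from $i_2$ lie in $C_1$ and $C_2$ respectively and are therefore disjoint, so by the walk interpretation the supports of rows $i_1$ and $i_2$ of \emph{any} product are disjoint and no product is scrambling. The same argument shows that a disconnected $\widehat{G}_{\s}$ has a sink in each weak component, hence at least two leaves; thus the failure of either hypothesis again produces two leaves and rules out a scrambling product. (In particular, this shows ``one leaf'' already forces connectedness.)

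For sufficiency I would assume $\widehat{G}_{\s}$ is connected with a unique leaf $C$. Since $\widehat{G}_{\s}$ is a DAG, following edges from any vertex terminates at a sink, and uniqueness forces every vertex of $G_{\s}$ to reach $C$, which is strongly connected. I would then construct a scrambling product by exhibiting one with a strictly positive column, which is automatically scrambling: fix a target $j^\ast \in C$, and splice the (finitely many) walks $i \rightsquigarrow j^\ast$ into a single sequence of factors of common length $N$ so that every start vertex $i$ reaches $j^\ast$ in exactly $N$ steps. The main obstacle is precisely this length-equalization, since walks to $j^\ast$ have different lengths and must be padded by looping inside $C$; this is where \emph{aperiodicity} is essential---the permutation example shows the graph condition alone is not enough---and it is supplied by the self-loops of the belief matrices (each person retains positive weight on their own belief), which make $C$ aperiodic and guarantee closed walks of every sufficiently large length at $j^\ast$. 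Choosing such an $N$, reading off the factors edge by edge, and invoking Proposition~\ref{prop: prob_one_converge} then completes the corollary.
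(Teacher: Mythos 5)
Your argument follows essentially the same route as the paper's: both reduce the corollary, via Proposition~\ref{prop: prob_one_converge}, to the combinatorial equivalence between the existence of a scrambling finite product and $\widehat{G}_{\s}$ being connected with a unique leaf, prove necessity by the disjoint-row-support argument for two sink components (noting that disconnectedness forces two sinks), and prove sufficiency by routing every state into the unique leaf and then making the product positive there. The one substantive difference is that you explicitly isolate the length-equalization/periodicity obstruction in the sufficiency step, whereas the paper simply asserts that communication within the leaf yields a product $C$ with $c_{ij}>0$ for all $i,j$ in the leaf --- which is precisely the point your permutation example shows can fail, so your version is the more careful one. However, your proposed source of aperiodicity --- self-loops because ``each person retains positive weight on their own belief'' --- is not an assumption the paper actually makes (e.g.\ $p_{22}=0$ in its first network example), so to be airtight you should add aperiodicity of the leaf class (realizable within $\s$) as an explicit hypothesis rather than derive it; with that caveat your proof is correct.
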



The limit of the product of sampled transition matrices may not exist when there are more than one leaf of $\widehat{G}_{\s}$.
Hence instead we now consider the expectation of such limit. 

\begin{prop}\label{prop: expectation}
Given a time-inhomogeneous Markov chain with $\s$ and $\mathbf{w}$, the expectation of the product of sampled transition matrices 
is equal to the limiting product of the expectation of the transition matrix, i.e. 
$ \mathbb{E} (\lim_{N\to \infty} \prod_{t=1}^{N}X^{t}) =\lim_{N\to \infty} \prod_{t=1}^{N} \mathbb{E} (X^{t})$, 
where $\avg (X^t) = \sum_{k=1}^{k=l} w_k \cdot A_k $. 
\end{prop}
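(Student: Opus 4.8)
The plan is to reduce the statement to two facts: a finite-horizon multiplicativity identity $\mathbb{E}\!\left[\prod_{t=1}^{N} X^t\right] = \prod_{t=1}^{N} \mathbb{E}[X^t]$ valid for every $N$, and a bounded-convergence argument that lets me pass to the limit $N\to\infty$. The first fact is pure linear algebra plus independence; the second is where the analytic care lies.

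First I would fix a finite $N$ and establish the multiplicativity of expectation over the independent factors. Since $X^1,\dots,X^N$ are i.i.d.\ draws from $\s$ according to $\mathbf{w}$, they are mutually independent, and each takes the value $A_k$ with probability $w_k$, so $\mathbb{E}[X^t]=\sum_{k=1}^{l} w_k A_k =: \bar A$ for every $t$. Expanding the $(i,j)$ entry of the ordered product as $\left(X^1 X^2\cdots X^N\right)_{ij} = \sum_{k_1,\dots,k_{N-1}} X^1_{i k_1} X^2_{k_1 k_2}\cdots X^N_{k_{N-1} j}$, I observe that each summand is a product of entries taken from distinct, hence independent, factors; therefore its expectation factorizes as $\mathbb{E}[X^1_{i k_1}]\cdots\mathbb{E}[X^N_{k_{N-1} j}]$. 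Summing over the intermediate indices and invoking linearity of expectation gives $\mathbb{E}\!\left[\prod_{t=1}^{N} X^t\right] = \prod_{t=1}^{N}\mathbb{E}[X^t] = \bar A^{N}$. This argument is insensitive to the ordering convention of the product, so it applies verbatim to whichever order $\prod_{t=1}^N$ denotes.

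Next I would let $N\to\infty$. On the right-hand side, $\prod_{t=1}^{N}\mathbb{E}[X^t]=\bar A^{N}$ converges to the limiting product $\lim_{N\to\infty}\prod_{t=1}^{N}\mathbb{E}(X^{t})$. For the left-hand side I must interchange limit and expectation. Every partial product $\prod_{t=1}^{N} X^t$ is itself stochastic, so each of its entries lies in $[0,1]$ and is dominated by the integrable constant function $1$ on the underlying probability space. Provided the sample-path limit $\lim_{N\to\infty}\prod_{t=1}^{N} X^t$ exists almost surely (which holds, for example, under the hypotheses of Proposition~\ref{prop: prob_one_converge} and Corollary~\ref{cor: one_leaf}, where the random product converges a.s.\ to a rank-one matrix), the Dominated Convergence Theorem applies entrywise and yields $\mathbb{E}\!\left(\lim_{N\to\infty}\prod_{t=1}^{N} X^t\right) = \lim_{N\to\infty}\mathbb{E}\!\left(\prod_{t=1}^{N} X^t\right)$. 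Chaining this with the finite-horizon identity of the previous step produces the claimed equality.

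The hard part will not be the algebra but the limit–expectation interchange: I must first guarantee that the almost-sure limit inside the expectation on the left actually exists, so that $\mathbb{E}\!\left(\lim_{N}\prod_{t} X^t\right)$ is well defined, and only then justify the swap. The uniform boundedness of stochastic matrices makes the dominating function trivial, so once almost-sure convergence of the random product is secured the interchange is immediate, and the entrywise factorization under independence becomes the only genuinely computational ingredient.
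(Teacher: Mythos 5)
Your proof follows the same route as the paper's: independence of the i.i.d.\ factors gives $\mathbb{E}\bigl[\prod_{t=1}^{N}X^{t}\bigr]=\prod_{t=1}^{N}\mathbb{E}[X^{t}]=\bar A^{N}$ entrywise, and one then lets $N\to\infty$. The only difference is that you explicitly justify the interchange of limit and expectation on the left-hand side via dominated convergence and correctly flag that the almost-sure existence of $\lim_{N}\prod_{t}X^{t}$ must be secured first — a step the paper's one-line proof passes over in silence — so your version is, if anything, the more complete one.
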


Based on the above analysis, we can now investigate long-term behavior when both network and concept structures are sampled from a collection of matrices 
$\s_{P} = \{P_1, \dots, P_k\}$ and $\s_{H} = \{H_1, \dots, H_l\}$ respectively. 
Let the condensed graphs corresponding to $\s_{P}$ and $\s_{H}$ be $\widehat{G}_{P}$ and $\widehat{G}_{H}$. 
According to Proposition~\ref{prop: prob_one_converge} and Corollary~\ref{cor: one_leaf}, analogous to Proposition~\ref{prop:homogeneous} for the time homogeneous case, 
the following holds.

When $\widehat{G}_{H}$ has only one leaf, or equivalent there is a finite product from $\s_{H}$ is scrambling or SIA, then with probability one 
everyone in the network converges to the same posterior distribution over the hypothesis set $\mathcal{H}$. In particular, the posterior distribution is 
supported only on the recurrent hypotheses, i.e. hypotheses in the leaf.
When $\widehat{G}_{H}$ has more than one leaf, or equivalently there is a common indecomposable structure for every matrix in $\s_{H}$, 
but $\widehat{G}_{P}$ has only one leaf, then everyone in the network still converges to the same posterior distribution over $\mathcal{H}$ with probability one. 
Moreover, the shared posterior distribution is a mixture of isolated posterior distribution of recurrent people (people in the leaf vertices).
Thus, the shared posterior distribution is completely determined by priors of recurrent people and their belief's corresponding confusion parameters.
    
When both $\widehat{G}_{H}$ and $\widehat{G}_{P}$ have more than one leaf, people in different recurrent classes (people in different leaf vertices) converge to different posterior distributions. In general, posteriors of people in transient states is a mixture of posteriors for recurrent classes where the mixture weights differ over time (no limit exists).

In all cases, Proposition~\ref{prop: expectation} suggests that the expectation of people's posterior is:
$\lim_{n \to \infty} (\avg P)^n \cdot M \cdot(\avg H)^n = (\avg P)^{\infty} \cdot M \cdot(\avg H)^{\infty}$.


\subsubsection{Rate of convergence} 
Next, we explore the rate of convergence of inhomogeneous Markov chains. We then discuss how to obtain the rate of convergence of the model when both $P_n$ and $H_n$ are time inhomogeneous.

\begin{prop} \cite{anthonisse1977exponential} 
Suppose that any product of matrices from $\s$ converges to a rank one matrix. Then there exist an integer $\nu \geq 1$,
for any sequence $\{A_1,...,A_n \}$, $n \geq 1$ of matrices from $\s$, such that for all $i, j = 1,..., N$,
$$|(A_1 ... A_n)_{ij}-\pi_j | \leq (1-\gamma)^{[n/\nu]} $$ for all $n \geq 1$, 
where $\bm \pi=(\pi_1, \dots, \pi_N)$ is the stationary distribution, $\gamma= \min \{ \gamma(A_1...A_\nu)|A_i \in S, 1\leq i \leq \nu \}$, and $[x]$ is the largest integer less than or equal to $x$.

\end{prop}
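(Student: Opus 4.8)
The plan is to work throughout with the ergodic coefficient $\lambda(\cdot)=1-\gamma(\cdot)$ of Definition~\ref{def:scrambling} and to combine two ingredients: its submultiplicativity and the existence of a uniform block length $\nu$ at which every product becomes scrambling. I would first record the contraction lemma: for a stochastic matrix $A$ and any row vector $\delta$ with $\delta\mathbf 1=0$ one has $\|\delta A\|_1\le \lambda(A)\|\delta\|_1$. Writing $\lambda(A)=\tfrac12\max_{i_1,i_2}\|A_{i_1\cdot}-A_{i_2\cdot}\|_1$ and taking $\delta=e_{i_1}-e_{i_2}$ then yields Dobrushin's inequality $\lambda(AB)\le\lambda(A)\lambda(B)$. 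This is a short, direct estimate and is the quantitative engine of the whole argument.

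The substantive step, and the one I expect to be the main obstacle, is producing a single integer $\nu\ge 1$ for which $\gamma=\min\{\gamma(A_1\cdots A_\nu):A_i\in\s\}>0$, i.e. for which every length-$\nu$ product is scrambling. By hypothesis every product of matrices from $\s$ converges to a rank-one matrix, which by Proposition~\ref{prop:rankone_inhomo} is equivalent to every product being SIA, and Proposition~\ref{prop:product_rankone} together with the surrounding remark identifies scrambling as the property to propagate. To pass from ``some power of each product is scrambling'' to a \emph{uniform} $\nu$, I would argue by finiteness: replace each matrix by its $0/1$ support pattern, so that products of a given length realize only finitely many patterns, and note that a product is scrambling exactly when every pair of rows shares a positive column. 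If no uniform $\nu$ existed, arbitrarily long products would fail this, and a compactness (König's lemma) argument on the finitely branching tree of products would extract an infinite product possessing two rows with permanently disjoint supports, contradicting convergence to a rank-one limit. Finiteness of $\s$ then makes $\gamma$ a minimum over finitely many products, so $\gamma>0$.

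With $\nu$ and $\gamma$ in hand, let $U_n=A_1\cdots A_n$ and split the factors into $k=[n/\nu]$ consecutive blocks $B_r=A_{(r-1)\nu+1}\cdots A_{r\nu}$ of length $\nu$, together with a leftover of fewer than $\nu$ factors. Each block satisfies $\lambda(B_r)\le 1-\gamma$ by the definition of $\gamma$, while the leftover, being stochastic, contributes $\lambda\le 1$. Submultiplicativity applied across the blocks then gives
\[
\lambda(U_n)\le \Big(\prod_{r=1}^{k}\lambda(B_r)\Big)\le (1-\gamma)^{[n/\nu]}.
\]

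Finally I would convert this into the stated entrywise bound. Since $\lambda(U_n)=\tfrac12\max_{i_1,i_2}\sum_j|(U_n)_{i_1 j}-(U_n)_{i_2 j}|$, the estimate above forces any two rows of $U_n$ to agree in every column up to $(1-\gamma)^{[n/\nu]}$. The hypothesis guarantees $U_n\to \mathbf 1\,\bm\pi$ for a common limiting row $\bm\pi$, which is the stationary distribution attached to the sequence; comparing row $i$ of $U_n$ against this common limit (to which all rows converge) transfers the row-spread bound to the deviation from $\pi_j$, giving $|(U_n)_{ij}-\pi_j|\le(1-\gamma)^{[n/\nu]}$ for all $i,j$ and all $n\ge 1$. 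The remaining bookkeeping—the normalization constant relating $\lambda(U_n)$ to the per-column row spread and the identification of $\bm\pi$ as the common limit—is routine once the uniform $\nu$ of the second step has been secured, which is why I regard that existence step as the crux of the proof.
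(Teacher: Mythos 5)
The paper does not actually prove this proposition: it is imported by citation from Anthonisse--Tijms, and no argument for it appears in the Supplemental Material. Your proposal is therefore being compared against a proof the paper never gives; what you have written is essentially the standard argument from the cited literature (Dobrushin submultiplicativity plus a uniform scrambling length), and it is correct in outline. Two points deserve tightening. First, in the K\"onig's lemma step, the infinite branch gives you, for every $n$, \emph{some} pair of rows of $A_1\cdots A_n$ with disjoint supports, not a single pair with ``permanently disjoint supports''; you need to pigeonhole over the finitely many row pairs to fix one pair $(i_1,i_2)$ whose rows have disjoint supports for infinitely many $n$, whence $\|(U_n)_{i_1\cdot}-(U_n)_{i_2\cdot}\|_1=2$ along a subsequence, contradicting the hypothesis that all rows converge to the common limit $\bm\pi$. (Note the prefix-closure of the non-scrambling property, which makes the set of non-scrambling products a subtree, is exactly Proposition~\ref{lemma:scrambling}.) Second, the final ``bookkeeping'' step is where a careless argument loses a factor of $2$: the clean way is to observe that $U_m=U_n\,(A_{n+1}\cdots A_m)$ expresses every row of $U_m$ as a convex combination of rows of $U_n$, so $\bm\pi$ lies in the closed convex hull of the rows of $U_n$, and hence $|(U_n)_{ij}-\pi_j|\le\max_{i'}|(U_n)_{ij}-(U_n)_{i'j}|\le\lambda(U_n)\le(1-\gamma)^{[n/\nu]}$, using that a zero-sum vector's largest entry is at most half its $\ell_1$ norm. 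Finally, your existence argument for $\nu$ is non-constructive, whereas the cited reference (and the paper's following paragraph) gives the explicit bound $\nu^*=\frac12(3^N-2^{N+1}+1)$; that is a difference in strength, not in correctness.
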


In other words, the above proposition provides an upper bound for the rate at which the network structure (or concept structure) stabilizes, for any SIA product of matrices in $\s_{P}$(or $\s_H$). Note that this depends on the ergodic coefficients of the matrices. Integer $\nu$ can always be taken
less than or equal to $\nu^* = \frac{1}{2}(3^N - 2^{N+1} + 1)$ \cite{anthonisse1977exponential}. 
Rate of convergence of an indecomposable Markov chain with transition matrices from $\s$ is upper bounded by $(1-\gamma)^{[n/\nu^*]}$. If the transition matrices are decomposable, we can perform similar analysis as discussed in section \ref{sec:ROC} by considering the convergence rate of each communicating class.

Now, we look at the convergence rate of the model when the network structure and the concept structure change over time. That is $P_n$ and $H_n$ are inhomogeneous. We assume that $P_n \in \s_P$ and $H_n \in \s_H$ where $\s_P$ is a finite set of social structure matrices and $\s_H$ is a finite set of concept structures. In other words, at each time step, people's network structure takes the form of a stochastic matrix from the finite set $\s_P$. Similarly, for $\s_H$. 

\begin{prop}
Let $R_P$ and $R_H$ be the convergence rates of $P_n$ and $H_n$, respectively. Then the model converges to its stationary distribution with a rate of at least $R=\min\{R_P,R_H\}$.
\end{prop}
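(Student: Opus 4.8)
The plan is to mirror the argument for the time-homogeneous case (Proposition~\ref{prop:ROC}), replacing the single-matrix power bound $(\lambda_P\lambda_H)^n$ by the inhomogeneous product estimate of \cite{anthonisse1977exponential}. First I would factor the model as $Q_n = \mathcal{P}_n\, M\, \mathcal{H}_n$, where $\mathcal{P}_n = \prod_{t=0}^{n} P_t$ and $\mathcal{H}_n = \prod_{t=0}^{n} H_t$ are the accumulated network and concept products with $P_t \in \s_P$ and $H_t \in \s_H$. The point of this factorization is that the network side and the concept side decouple and can each be controlled by the preceding proposition.

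Second, I would apply the exponential bound of \cite{anthonisse1977exponential} to each factor separately. Since every relevant product from $\s_P$ (resp.\ $\s_H$) is SIA, $\mathcal{P}_n$ converges to a rank-one limit $\mathcal{P}_\infty = \mathbf{1}\,\bm\pi_P$ and $\mathcal{H}_n$ to $\mathcal{H}_\infty = \mathbf{1}\,\bm\pi_H$, with entrywise errors bounded by $(1-\gamma_P)^{[n/\nu_P]}$ and $(1-\gamma_H)^{[n/\nu_H]}$. Working in the row-sum ($\infty$) norm, in which every row-stochastic matrix is nonexpansive, I would record these as $\|\mathcal{P}_n-\mathcal{P}_\infty\|\le C_P\,\rho_P^{\,n}$ and $\|\mathcal{H}_n-\mathcal{H}_\infty\|\le C_H\,\rho_H^{\,n}$, where $\rho_P,\rho_H<1$ encode the per-step rates $R_P,R_H$ of Definition~\ref{Def:convergence rate}.

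Third, I would substitute $\mathcal{P}_n = \mathcal{P}_\infty + E^P_n$ and $\mathcal{H}_n = \mathcal{H}_\infty + E^H_n$, expand the triple product, identify the limit $Q_\infty = \mathcal{P}_\infty M \mathcal{H}_\infty$, and use submultiplicativity of the $\infty$-norm together with $\|M\|=\|\mathcal{P}_\infty\|=\|\mathcal{H}_\infty\|=1$:
\[
\|Q_n-Q_\infty\| \le \|E^P_n\|\,\|M\|\,\|\mathcal{H}_\infty\| + \|\mathcal{P}_\infty\|\,\|M\|\,\|E^H_n\| + \|E^P_n\|\,\|M\|\,\|E^H_n\| \le C_P\rho_P^{\,n} + C_H\rho_H^{\,n} + C_P C_H(\rho_P\rho_H)^{\,n}.
\]
The dominant term is governed by $\max\{\rho_P,\rho_H\}$, i.e.\ by the slower of the two structures; the cross term decays strictly faster and is harmless. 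Hence $Q_n$ converges geometrically at a base no larger than the slower of the two, which translates into a convergence rate of at least $R=\min\{R_P,R_H\}$, as claimed. Intuitively, the society reaches steady state only once both the network product and the concept product have stabilized, so the slower factor is the bottleneck.

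The main obstacle is the decomposable case: when $P_t$ or $H_t$ are decomposable the accumulated products need not converge to a single rank-one matrix, and when $\widehat{G}_{P}$ or $\widehat{G}_{H}$ has several leaves the inhomogeneous product may fail to converge at all (the mixture weights on transient states can oscillate). To handle this I would restrict the estimate to each closed communicating (recurrent) class, where the restricted products are SIA and the \cite{anthonisse1977exponential} bound applies verbatim, and then take $R_P$ (resp.\ $R_H$) to be the slowest class rate, exactly as in the decomposable discussion of Section~\ref{sec:ROC}. The rest is bookkeeping: the norm is chosen so that stochastic matrices are nonexpansive, which keeps the constants $C_P,C_H$ from inflating across the product and makes the three-term expansion above tight up to constants.
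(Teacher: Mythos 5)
Your proposal is correct and follows essentially the same route as the paper, which simply asserts that the argument mirrors Proposition~\ref{prop:ROC}: the two accumulated products are controlled separately (via the exponential bound of \cite{anthonisse1977exponential}) and the slower of the two is the bottleneck. Your explicit triple-product expansion in the row-sum norm and the restriction to recurrent classes in the decomposable case are just a more careful write-up of what the paper leaves implicit.
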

Proof follows from an argument similar to Proposition \ref{prop:ROC}. The society will reach the steady state distribution only when both network and concept structures are stabilized.

 \section{Belief evolution over dynamic, homophily-based networks} \label{sec:KL}

Results in the previous section assume that network structures are either static or change at random. However, network structures in society, especially in terms of who we communicate with, are affected by our beliefs \cite{liu2018competition, mcpherson2001birds, murase2019structural}. For example, people may be more likely to talk with people whose beliefs are more similar to their own, either because of consistency of beliefs in a geographic region \cite{cepic2020social, khanam2020homophily}, or through active selection of partners. 
Because beliefs change based on who one talks with, networks that are based on homophily may be dynamic. 
In this section, we analyze belief evolution for societies whose structures are governed by homophily. 


Given people's initial priors $M$ we create the network structure of people and cognitive structure of concepts based on belief similarity. 
Specifically, we construct the homophily network structure by linking people whose beliefs are sufficiently similar. 
Let $M_n$ be the matrix representing beliefs of individuals at time~$n$. 
Further $S: \mathbb{R}^n \times \mathbb{R}^n \to \mathbb{R}^{*}$ is a function that measures divergence between to vectors,
where $S(\mathbf{v}, \mathbf{u}) = 0$ indicates $\mathbf{v} = \mathbf{u}$.
Then for a given similarity threshold $\epsilon_p>0$, individuals $\alpha_i$ and $\alpha_j$ are linked, i.e. $p_{ij} > 0$, 
if $S(\mathbf{p}_i, \mathbf{p}_j) < \epsilon_p$ where $\mathbf{p}_i, \mathbf{p}_j$ are the row vectors in $M_n$ corresponding to $\alpha_i$ and $\alpha_j$; otherwise $p_{ij} = 0$. 

Similarly, we construct the homophily concept structure by linking concepts that are held to similar degrees across people.
In particular, let $\widehat{M}_n$ be the column normalization of $M_n$. Then for a given similarity threshold $\epsilon_h>0$, 
concepts $\beta_k$ and $\beta_l$ are linked, i.e. $h_{kl}>0$ if $S(\mathbf{h}_k,\mathbf{h}_l) < \epsilon_h$ where $\mathbf{h}_k, \mathbf{h}_l$ 
are the column vectors of $\widehat{M}_n$ corresponding to $\beta_k$ and $\beta_l$; otherwise, $h_{kl} = 0$. 
In this section, we measure similarity of beliefs between pairs of people, and of degrees between pairs of concepts via Kullback-Leibler (KL) divergence. 
In addition to being a natural measure of divergence between beliefs, KL divergence is asymmetric, which means that our network and 
concept structures are not restricted to be symmetric.

\begin{definition} \textbf{[KL divergence]}
Let $A=(a_{ik})$ be a row stochastic matrix.
Define KL divergence between two discrete probability distributions, $p$ and $q$, in $\mathbb{R}^K$, 
$$\text{KL}(p,q)=\sum_{k \in \mathcal K} p_k \log\left(\frac{p_k}{q_k}\right).$$
\end{definition}


If two individuals or concepts are sufficiently similar, they will be linked. Next we calculate the strength of the links as a relative divergence. In particular, the strength of the link is related to their divergence relative to other linked individuals or concepts by the softmax function.

\begin{definition}\textbf{[Softmax function]} For a given vector, $\mathbf{a} = (a_1, \dots, a_N) \in \mathbb{R}^N$ and a parameter $\beta$, the softmax function $\sigma$ of $\mathbf{a}$ is, 
$\sigma(\mathbf{a})=\frac{e^{-\beta \mathbf{a}}}{\sum_{i=1}^N e^{-\beta a_i}}$.
\end{definition}

We define the weights of the links between individuals as follows: Let $S_{ij}={S(\mathbf{p}_i,\mathbf{p}_j)}$ be the similarity of beliefs between individuals $\alpha_i$ and $\alpha_j$ and $\mathbf{S}_i=\{S_{i1},S_{i2},...S_{ir}\}$.
We define $\mathbf{w}_i=\sigma(\mathbf{S}_i)$, where $\mathbf{w}_i$ be the vector with weights of the probabilistic links from $\alpha_i$ to $\alpha_j$, $\forall j=\{1,2,..,r\}$. Similarly we define the weights of the links for concept structure using Softmax function.






    
    








We now introduce the homophily-based model, which at each time step adapts its structure on $P_{n+1}$ and $H_{n+1}$ based on $M_n$,
$$ Q_{n+1}(P_t, H_t, M)=\prod_{t=0}^{t=n+1} P_t \cdot M \cdot\prod_{t=0}^{t=n+1} H_{t} = P_{n+1} M_{n} H_{n+1}$$
where $M_n$ is the matrix representing beliefs of individuals, $j \in \{1,2, \dots, r\}$, in concepts, $k\in \{1,2, \dots, s\}$, at time $n$
and $P_{n+1}$ is the network structure matrix and $H_{n+1}$ is the concept structure derived from $M_n$ as described above. 
One question we may ask is whether the dynamic nature of the homophily structures yield interesting changes in the asymptotic structure of 
the society. We have seen from previous results that as long as one of the network or concept structures is indecomposable, the long run behavior is that everyone converges to a single group with the same beliefs. We now prove a lower bound on the number of groups of beliefs for homophily-based dynamic structures, which shows the same does not hold. 


\begin{definition}
Let $\mathcal{V} = \{\mathbf{v_1},\dots, \mathbf{v_k}\} \subset \mathbb{R}^{n}$ be a set of vectors. 
Given $\e >0$, a \textbf{$\e_{p}$-KL cluster} over $\mathcal{V}$ is defined to be a subset $V\subset \mathcal{V}$ such that:
for any $\mathbf{v}_{i} \in V$, $\text{KL}(\text{Conv}(V_{-i}),\mathbf{v}_{i})< \e$ holds,
and for any $\mathbf{v}_{j} \notin V$, $ \text{KL}(\text{Conv}(V),\mathbf{v}_{j} )\geq \e$ holds, 
where $V_{-i}$ represents omitting $\mathbf{v}_i$ in $V$, and $\text{Conv}(\cdot)$ represents the convex hull. 
\end{definition}

\begin{thm}\label{thm: lowerbound}
Let $M$ be the initial belief matrix and $\e_p$ be the threshold of network structure.
Assume the concept structure $H$ is the identity.
Then the number of groups in network structure (number of communicating classes in $P_t$) is bounded below by the number \textbf{$\e_{p}$-KL clusters} 
over row vectors of $M$. Similarly, assume $P$ is identity, 
then the number of groups in concept structure (number of communicating classes in $H_t$) is bounded below by the number \textbf{$\e_{h}$-KL clusters} 
over column vectors of $M$.
\end{thm}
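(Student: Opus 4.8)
The plan is to reduce both halves of the theorem to a single structural fact about the homophily network built from~$M$, and to establish the mirror symmetry between them. Since the concept structure is fixed at $H=I$, the model $Q_{n+1}=P_{n+1}\,M_n\,H_{n+1}$ collapses to the update $M_{n+1}=P_{n+1}M_n$, so every new belief row is a convex combination of the current belief rows, and the people indexed by a set $V$ can only redistribute weight among vectors lying in $\mathrm{Conv}(V)$. The two claims are then transposes of one another: for the network bound I work with the row vectors $\mathbf p_i$ of $M$ and the left action of $P_t$, while for the concept bound I set $P=I$, use $M_{n+1}=M_nH_{n+1}$, and replace row vectors by the column vectors of the normalized $\widehat M$. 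I would prove the network statement and obtain the concept statement by this symmetry. Throughout I read $\mathrm{KL}(\mathrm{Conv}(V),\mathbf v)$ as the divergence from the \emph{set} $\mathrm{Conv}(V)$, i.e. $\min_{\mathbf u\in\mathrm{Conv}(V)}\mathrm{KL}(\mathbf u,\mathbf v)$; fixing this reading is itself part of the work.

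The key lemma is that external separation forces each cluster to be a \emph{closed} set of the directed graph $G_{P_t}$. Fix an $\e_p$-KL cluster $V$, an index $i$ with $\mathbf p_i\in V$, and an index $j$ with $\mathbf p_j\notin V$. Since $\mathbf p_i\in V\subseteq\mathrm{Conv}(V)$, the separation inequality $\mathrm{KL}(\mathrm{Conv}(V),\mathbf p_j)\ge\e_p$ applied at the hull point $\mathbf p_i$ yields $\mathrm{KL}(\mathbf p_i,\mathbf p_j)\ge\e_p$; because the link rule puts exactly this quantity (first argument the $V$-member) against the threshold, it follows that $p_{ij}=0$. Hence no edge of $G_{P_t}$ leaves $V$, so $V$ is closed in the sense of Definition~\ref{Def:closed,irred,indecom}, and every closed set of a finite chain contains at least one (closed) communicating class. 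The internal-closeness half of the cluster definition plays the complementary role of certifying that $V$ is a genuine homophily group and not an arbitrary closed set: each $\mathbf p_i$ is within $\e_p$ of $\mathrm{Conv}(V_{-i})$, so the threshold does not split $V$ off.

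To upgrade ``at least one class per cluster'' to the counting bound I need distinct clusters to account for distinct communicating classes, which reduces to showing that distinct $\e_p$-KL clusters yield disjoint closed sets (disjoint closed sets cannot share a communicating class). Here I would combine, at a putative shared vertex, the external-separation inequality of one cluster with the internal-closeness inequality of the other, and exhibit a single point that is simultaneously forced to satisfy $\mathrm{KL}\ge\e_p$ and $\mathrm{KL}<\e_p$; if instead one works with clusters taken as the disjoint blocks of the $\e_p$-threshold graph, disjointness is immediate and only the per-cluster closedness is needed. Finally, to see that the bound persists for $P_t$ at every time $t$ and not merely at the first construction, I would invoke joint convexity of KL divergence: because $V$ is closed, its members only average over vectors in $\mathrm{Conv}(V)$, so their beliefs remain in $\mathrm{Conv}(V)$ for all time and the cluster stays closed under the evolving homophily rule.

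I expect the main obstacle to be exactly the two places where ``separation must survive.'' The first is the counting step: the asymmetry of KL means I must track carefully which argument of $\mathrm{KL}$ lies in the convex hull, and the inequality chain establishing disjointness (or distinctness of the trapped classes) has to close without slack. The second is preserving the external separation dynamically — an outsider's belief is itself a moving convex combination and could in principle drift toward $\mathrm{Conv}(V)$, threatening to create an incoming coincidence; the joint-convexity/contraction argument is the natural tool, but making it deliver a separation bound that is uniform in $t$ is the delicate point I would spend the most care on.
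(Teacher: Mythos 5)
Your static argument at $t=1$ is fine and matches the spirit of the paper's Step~1: the external-separation half of the cluster definition, instantiated at a hull point $\mathbf p_i\in V$, kills every edge $p_{ij}$ leaving $V$, so $V$ is closed and traps a communicating class. But the step you yourself flag as ``the delicate point'' --- making the separation uniform in $t$ --- is a genuine gap, and the tool you propose cannot close it. The cluster definition only separates $\mathrm{Conv}(V)$ from the \emph{initial} positions of the outsiders $\mathbf v_j\notin V$. At later times an outsider's belief has moved to an arbitrary point of the convex hull of its own component, and two point sets that are pairwise $\e_p$-separated can have convex hulls that approach or even intersect; nothing in the static definition forbids an evolved outsider from landing within $\e_p$ of $\mathrm{Conv}(V)$, at which point a cross-link forms and the two classes merge. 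Joint convexity of KL goes the wrong way here: it gives an \emph{upper} bound on the divergence between mixtures in terms of the divergences of the components, whereas you need a \emph{lower} bound on $\mathrm{KL}(\mathbf p^t_i,\mathbf p^t_j)$ for evolving $\mathbf p^t_i\in\mathrm{Conv}(V)$ and evolving $\mathbf p^t_j$.

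The paper's proof supplies exactly the ingredient your plan is missing. It does not take the clusters from the definition and verify closedness; it \emph{constructs} them by an iterated merging procedure: start from the $\e_p$-threshold graph on the rows of $M$, note that each person's belief is forever confined to the convex hull of their connected component, and then repeatedly merge any two components $V_1,V_2$ whose hulls satisfy $B^{KL}_{\e_p}(\mathrm{Conv}(V_1))\cap\mathrm{Conv}(V_2)\neq\emptyset$ (or the reverse), until the components stabilize. After convergence, \emph{every possible future position} of a member of one component is at KL-distance at least $\e_p$ from \emph{every possible future position} of a member of another, so no cross-component link can form at any $t$; the communicating classes of every $P_t$ therefore refine these components, and the count follows. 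Your proof would be repaired by replacing the appeal to joint convexity with this hull-neighborhood merging (equivalently, by proving that the clusters of the definition already satisfy the stabilized condition $B^{KL}_{\e_p}(\mathrm{Conv}(V))\cap\mathrm{Conv}(W)=\emptyset$ for distinct clusters $V,W$ --- which is a further claim you would have to establish, not a consequence of the definition as written). Your disjointness step has a similar soft spot: combining external separation of $V$ with internal closeness of $W$ at a shared vertex does not produce a contradiction unless $\mathrm{Conv}(W_{-u})\subseteq\mathrm{Conv}(V)$, so you should take the route where clusters are the (automatically disjoint) components of the converged graph, as the paper does.
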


We first describe an algorithm to construct \textbf{$\e_p$-KL clusters}, the proof then follows along.
\begin{itemize}
    \item \textbf{Step 0} Each row of $M$ (representing belief of a person) can be realized as a point $p^0_i \in \mathbb{R}^s$. 
    View each point as a vertex (representing a person) to obtain $G_{P_0}$;
    \item \textbf{Step 1} For a pair of vertices $\alpha_i$ and $\alpha_j$, add an edge $e_{ij}$ if $\text{KL}(p^0_i,p^0_j) < \e_p$ to obtain $G_{P_1}$. \textit{Note}: Let $V$ be the vertex set of a connected component of $G_{P_1}$, and $\alpha_l \in V$ be a person belongs to this group. 
    Then since $\alpha_l$'s belief  will be updated as a linear combination of concepts in $V$, 
    the point $p^0_l$ in $\mathbb{R}^s$ representing $\alpha_l$ can only move to a new point $p^1_l$ in the convex hull $\text{Conv}(V)$ of $V$.

    \item \textbf{Step 2} For each pair of connected components $V_1$ and $V_2$ of $G_{P_k}$, 
    if $B_{\e_p}^{KL} (\text{Conv}(V_1))\cap\text{Conv}(V_2) \neq \emptyset$ or $B_{\e_p}^{KL} (\text{Conv}(V_2)))\cap\text{Conv}(V_1) \neq \emptyset$, thenadd an edge from a person $\alpha_i$ in $V_1$ to  $\alpha_j$ in $V_2$ to obtain $G_{P_{k+1}}$.
    Here $B_{\e_p}^{KL} (\text{Conv}(V)) \subset \mathbb{R}^s$ is the set that contains all points within $\e_p$ 
    close of $\text{Conv}(V))$ measured by KL-divergence, i.e.
    for any $p \in B_{\e_p}^{KL} (\text{Conv}(V)) \subset \mathbb{R}^s$, there exists a $q \in \text{Conv}(V)$ such that $\mathbf{KL}(q,p) < \e_p$.
    
    \item Repeat \textbf{step 2} until connected components of $G_{P_k}$ stabilize, denote the converged graph by $G_{P}$.

\end{itemize}

It is clear from the above construction that each vertex set of a connected component of $G_{P}$ forms a $\textbf{$\e_{p}$-KL cluster}$ over rows of $M$.
On the other hand, communication can only happen between people within the same connected component of $G_{P}$ for any choice of $t$. Indeed, if $\alpha_i$ and 
$\alpha_j$ communicate time $t^*$, i.e. $\text{KL}(p^{t^*}_i, p^{t^*}_j) < \epsilon$ or $\text{KL}(p^{t^*}_j, p^{t^*}_i) < \epsilon$, the connected components containing $\alpha_i$ and $\alpha_j$ will be connected for any $t>t^*$.
Hence, the number of groups in network structure (number of communicating classes in $P_t$) is bounded below by the number \textbf{$\e_{p}$-KL clusters} 
over row vectors of $M$. Thus, Theorem~\ref{thm: lowerbound} holds. Even though we use KL divergence as a natural choice, above results hold for any divergence.

Even though $P_t$ and $H_t$ change over time in the homophily model, numerical simulations show that the model converges to its stationary distribution after few steps. 
Using this framework we can illustrate various behavior including isolated individuals, emergence of subgroups, and evolving into a society with a homogeneous belief system.  Example \ref{EX:sim1} below shows how society evolves into sub communities with people in the same group having the same belief distribution, and that isolated individuals are also possible.

\begin{example} \label{EX:sim1}
Consider a network with  $ \epsilon_p=0.3, \epsilon_h=0.25$
and 
$
\small
M={ \begin{pmatrix}
0.263 & 0.472 & 0.084 & 0.181\\
0.06 & 0.103 & 0.683 & 0.154\\
0.69 & 0.176 & 0.043 & 0.091\\
0.029 & 0.136 & 0.479 & 0.357\\
0.252& 0.463& 0.08 & 0.204
\end{pmatrix}}$.
Using the homophily-based framework developed above we get, 
$P_1= \small{
\begin{pmatrix}
0.5  & 0&    0 &   0  &  0.5  \\
 0 &   0.532 & 0  &  0.468 & 0  \\
 0  &  0  &  1  &  0 &   0   \\
 0 &   0.464 & 0 &   0.536  &0   \\
 0.5  & 0   & 0  &  0 &   0.5  
 \end{pmatrix}}, H_1= I_{4\times 4}.$
For all $t\geq 2$: 
$P_t=\small{
\begin{pmatrix}
0.5 & 0 &  0 & 0 & 0.5\\
 0 &  0.5 & 0 &  0.5&  0 \\
0 &  0 &  1 &  0&   0 \\
0 &  0.5 &  0 &  0.5 &  0 \\
0.5 & 0 &  0  & 0 &  0.5
 \end{pmatrix}},$ and for all $t\geq 5$: $
 H_t=\small{
\begin{pmatrix}
1 &  0 &  0&   0 \\
0 &  1 &  0 &  0 \\
0 &  0 &  0.5 & 0.5\\
0 &  0 &  0.5&  0.5
  \end{pmatrix}}$.
and 
$Q_t=\small{
\begin{pmatrix}
0.258 & 0.468 & 0.095 & 0.179\\
0.044 & 0.12&  0.29&  0.546\\
0.69&  0.176& 0.046& 0.088\\
0.044& 0.12&  0.29&  0.546\\
0.258& 0.468& 0.095& 0.179
   \end{pmatrix}}$ 

All the matrices are rounded up to 3 decimal places. Notice that when $t=1$, $H$ does not have any links. At $t=5$, $P$ and $H$ have stabilized to stationary distribution $lim_{n\to \infty}P_n$ and $lim_{n\to \infty}H_n$, respectively. $Q_4$ shows the limiting belief distribution of the society, $lim_{n\to \infty}Q_n$. Here, three subgroups has emerged: groups ${\alpha_1,\alpha_5}$ and ${\alpha_2,\alpha_3}$ and the isolated individual $\alpha_3$. We can see that people in each subgroup has their unique belief distribution.

\end{example}

It is intuitive that when $\epsilon_p$ or $\epsilon_h$ is increased enough while the number of people and number of belief are fixed, the society will display a homogeneous belief distribution in the long run.  Example \ref{EX:sim2} illustrates this scenario.
\begin{example} \label{EX:sim2}
Next we consider the same $M$ and $\epsilon_p$ as in \ref{EX:sim1} and let $ \epsilon_h=0.4$. Then the society stabilizes to a unique belief distribution. In particular, for all $t\geq 4$:
$Q_t=\small{
\begin{pmatrix}
0.295& 0.186& 0.258& 0.262\\
0.295& 0.186& 0.258& 0.262\\
0.295& 0.186& 0.258& 0.262\\
0.295& 0.186& 0.258& 0.262\\
0.295& 0.186& 0.258& 0.262
   \end{pmatrix}}$.
   
\end{example}

Further examples below illustrates the evolution of homophily networks over time. Namely, Examples \ref{Ex:KL1}, \ref{Ex:KL2} and \ref{Ex:KL3} show how each person's beliefs 
evolves with time, for different initial $M$ matrices and threshold values. In each example we consider three concepts $h_1,h_2,h_3$. The concept space is represented by an equilateral triangle with vertices $h_1\equiv(1,0,0), h_2\equiv(0,1,0), h_3\equiv(0,0,1)$.
Each person's belief at time $t$ 
is denoted by a point inside the triangle. 
All the points $\{\mathbf{x}\in \mathbb{R}^3: \text{KL}(\mathbf{p}_i,\mathbf{x}) < \epsilon_p\}$ for each person $\alpha_i$ (referred as KL regions), are represented by the coloured regions, at every time step until the model converges. At any time step, if a person $\alpha_j$ is in the KL region of another person $\alpha_i$, then $\alpha_i$ creates a communication link with $\alpha_j$, represented by a line connecting the two corresponding points. (These are unidirectional links, however we show them by a line). Observe that these regions evolve with time.
\ref{Ex:KL1} illustrates that the
creation as well as destroying of network links are possible. Moreover, the changes in threshold parameters changes the limiting behavior. In \ref{Ex:KL2} the society converge to two groups. However in \ref{Ex:KL2} where we increase $\epsilon_h$ while keeping everything else the same, society converge to  one group. 

\begin{example} \label{Ex:KL1} 
We consider five people $i=1,...,5$ and three concepts $h_1,h_2,h_3$. Initial $M$ is
$\small{ \begin{pmatrix}
0.348 & 0.039 & 0.6132\\
0.321 & 0.609& 0.07\\
0.884& 0.083& 0.033 \\
0.082 & 0.185& 0.733\\
0.372 & 0.281 & 0.347 
\end{pmatrix}}$, $\epsilon_p=0.3$ and $\epsilon_h=0.2$.
At each time step, each individual's KL region changes, leading to creating new communication links or destroying existing ones. We observe that at t=5, the society stabilizes and persons $i=1$ and $2$ become isolated while the others converge to one subgroup (Figure~\ref{F1}). 

\begin{figure}[H] 
\begin{minipage}{.28\textwidth}
  \includegraphics[scale=0.2]{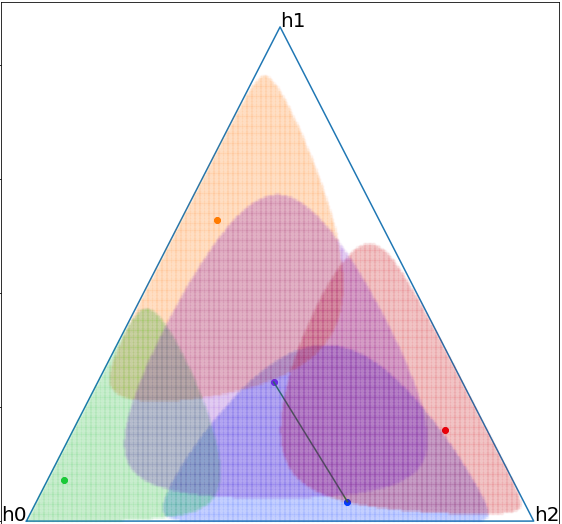}
   \caption*{t=1}
 \end{minipage}
 \begin{minipage}{.25\textwidth}
  \includegraphics[scale=0.2]{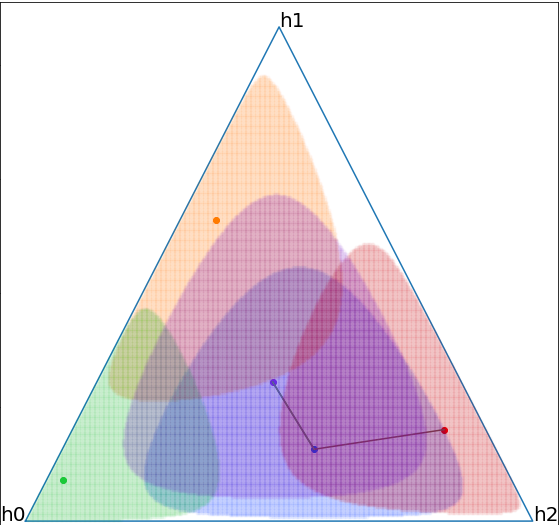}
\caption*{t=2}
 \end{minipage}
 \begin{minipage}{.28\textwidth}
  \includegraphics[scale=0.2]{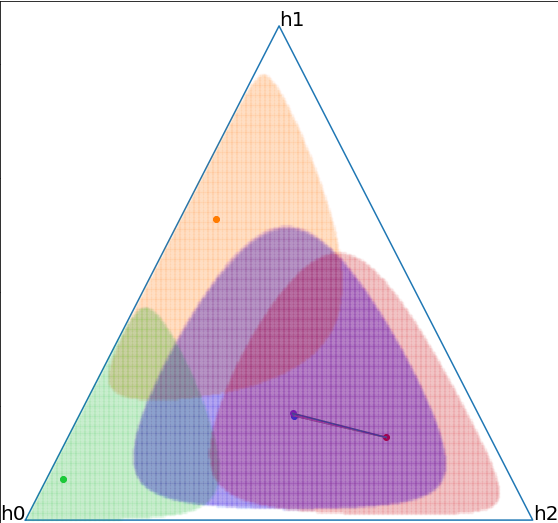}
\caption*{t=3}
 \end{minipage}\\
 \begin{minipage}{.28\textwidth}
  \includegraphics[scale=0.2]{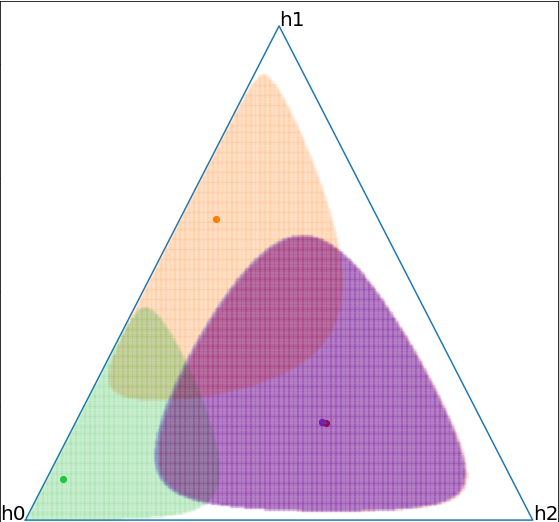}
\caption*{t=4}
 \end{minipage}
 \begin{minipage}{.28\textwidth}
  \includegraphics[scale=0.2]{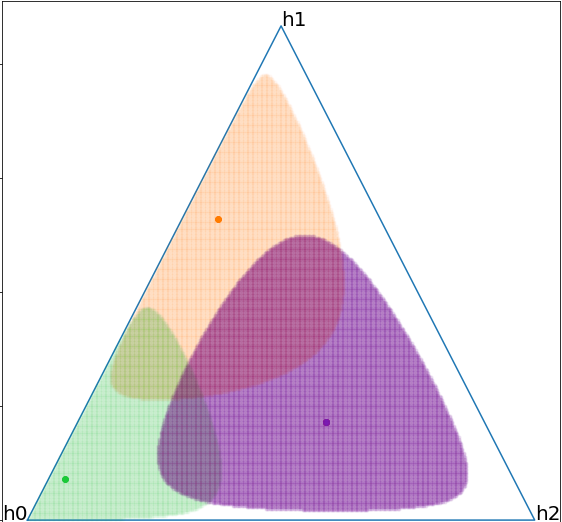}
  \caption*{t=5}
 \end{minipage} 
 \begin{minipage}{.5\textwidth}
 \hspace{15mm}
 \includegraphics[]{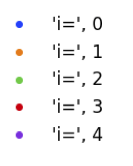}
 \end{minipage} 
\caption{Each colored point inside a triangle represents the belief of a person $i$ at time $t$. The corresponding colored region represents the KL region of the person $i$. A line that connects two points denotes that the corresponding two people are communicating. } \label{F1}
\end{figure}
\end{example}
\begin{example} \label{Ex:KL2}In this example, we consider four people $i=1,...,4$ and three concepts. Initial $M$ is 
$\small{ \begin{pmatrix} 
0.489& 0.104& 0.407\\
0.033& 0.712& 0.255\\
0.543& 0.182& 0.275\\
0.248& 0.375& 0.3776 
\end{pmatrix}}$, $\epsilon_p=0.3$ and $\epsilon_h=0.05$.
We observe that, at $t=3$ society stabilizes into two subgroups. This example shows that, as time evolves existing links can be destroyed as well (Figure~\ref{F2}).
\begin{figure}[H]
 \begin{minipage}{.2\textwidth}
  \includegraphics[scale=0.2]{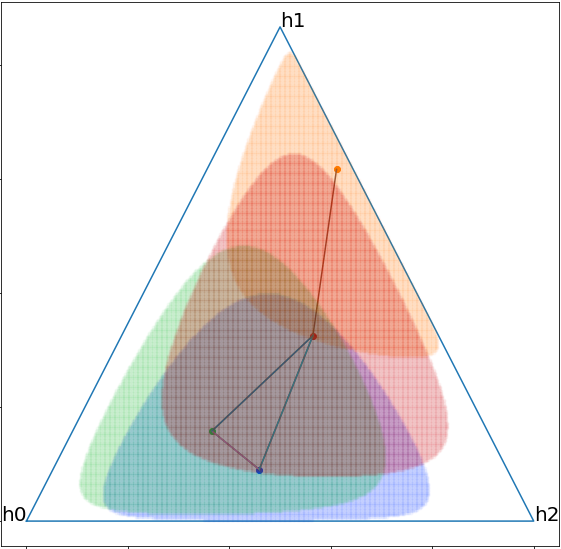}
   \caption*{t=1}
 \end{minipage}
  \hspace{10mm}
 \begin{minipage}{.2\textwidth}
  \includegraphics[scale=0.2]{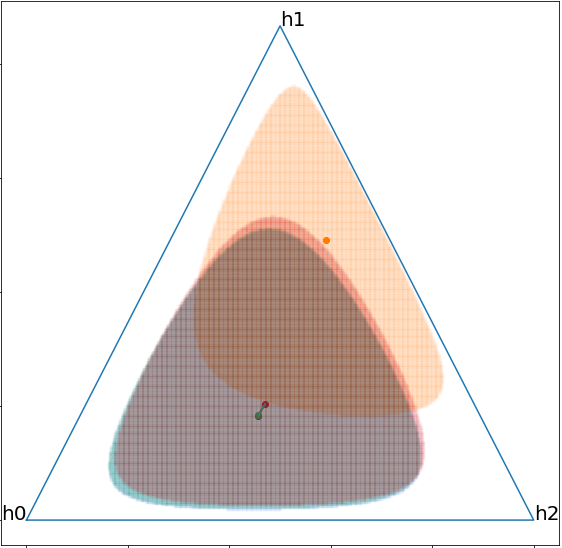}
\caption*{t=2}
 \end{minipage}
  \hspace{10mm}
 \begin{minipage}{.2\textwidth}
  \includegraphics[scale=0.2]{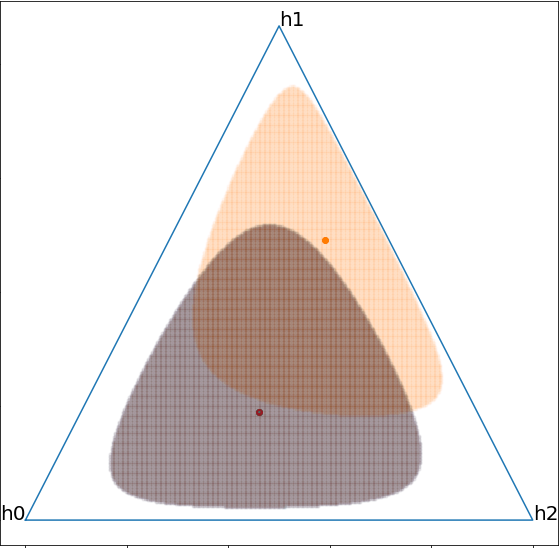}
\caption*{t=3}
 \end{minipage}
  \hspace{8mm}
 \begin{minipage}{.2\textwidth}
 \includegraphics[]{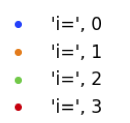}
 \end{minipage} 
\caption{Each colored point inside a triangle represents the belief of a person $i$ at time $t$. The corresponding colored region represents the KL region of the person $i$. A line that connects two points denotes that the corresponding two people are communicating. } \label{F2}
\end{figure}
\end{example}
\begin{example}\label{Ex:KL3}
Now we consider $M$ and $\epsilon_p$ as in Example \ref{Ex:KL2} and let $\epsilon_h=0.5$.
We observe that, no links will be destroyed and at $t=3$ society stabilizes to one stationary belief distribution (Figure \ref{F3}). Observe that this example clearly illustrates the fact that the structure of the concept space affect the long term dynamics of belief distribution in a society, just as the social structure.
\begin{figure}[H]
 \begin{minipage}{.2\textwidth}
  \includegraphics[scale=0.2]{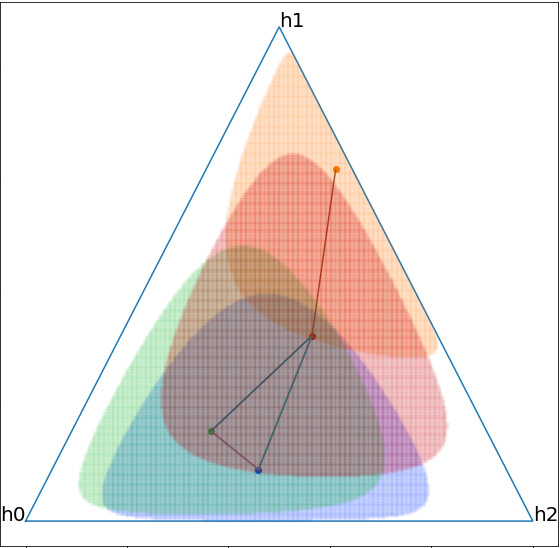}
   \caption*{t=1}
 \end{minipage}
  \hspace{10mm}
 \begin{minipage}{.2\textwidth}
  \includegraphics[scale=0.2]{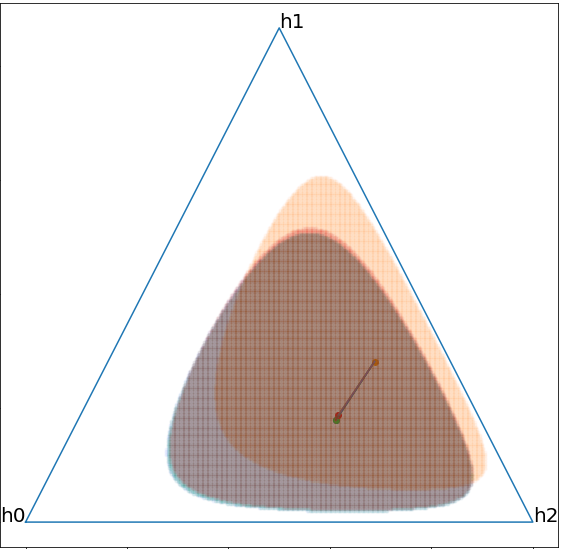}
\caption*{t=2}
 \end{minipage}
  \hspace{10mm}
 \begin{minipage}{.2\textwidth}
  \includegraphics[scale=0.2]{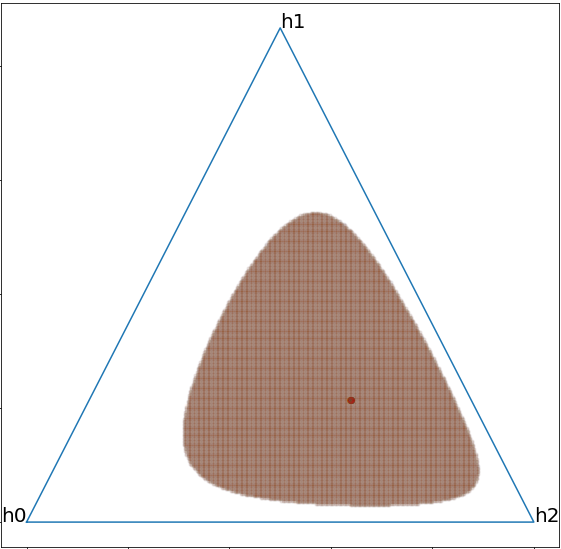}
\caption*{t=3}
 \end{minipage}
  \hspace{8mm}
 \begin{minipage}{.2\textwidth}
 \includegraphics[]{legend1.png}
 \end{minipage} 
\caption{Each colored point inside a triangle represents the belief of a person $i$ at time $t$. The corresponding colored region represents the KL region of the person $i$. A line that connects two points denotes that the corresponding two people are communicating.} \label{F3}
\end{figure}
\end{example}

\section{Discussion}

We presented a mathematical model of that allows for transmission of beliefs over a set of concepts both across people (horizontal) and across time (vertical).  The model assumes structures over both individuals and concepts.  Individuals’ beliefs about a particular concept can change either because they are connected to an individual with different beliefs or because of a change in beliefs about a related concept.  We analyzed three cases: static social network and concept structures, social network and concept structures that change at random over time, and structures that vary dynamically based on homophily.

For static and randomly changing networks, we proved that if indecomposibility is satisfied by the initial (collection of) structures, then individuals in society will converge to a single group with the same beliefs. In the case of dynamically changing networks, we find a sufficient condition for heterogeneity to occur.  We also provided lower bounds for the rate of convergence of the model for both static and changing networks.  Our results align with previous studies showing rates of convergence slow with multidimensional transmission~\cite{page2007conformity}. For network structures that dynamically change based on homophily, we find that the society could either converge to a homogeneous distribution or sub groups with same beliefs and or to isolated individuals, based on a threshold on divergence between people and between beliefs. We proved conditions under which the lower bound on the number of groups is greater than one, thus identifying sufficient conditions under which individuals within society will converge to more than one group characterized by different beliefs.

Prior analyses of horizontal transmission have investigated richer social network structures, but have not considered learners who maintain distributions of beliefs.  This research has focused on rate of transmission as a function of the connectivity pattern in the graph. Transmission is assumed to occur by copying a random neighbor in the graph. For example, small world network structures \cite{albert2002statistical} yield rapid transmission to a large proportion of the network due to the short average minimal distance between individuals.  Thus, it does not allow for the possibility of polarization.

Our findings differ from prior analyses of vertical transmission which consider static network structures and chains of individuals  passing beliefs via random selection of data unidirectionally \cite{griffiths2007language} which show that convergence to a stationary distribution. This analysis holds for cases where individuals do not receive information from the world, and for cases where they receive data from both their predecessor and the world \cite{griffiths2005bayesian}. Across these cases, individuals in society, after long enough, all hold the same beliefs up to some variance that depends on the amount of data sampled from the world.

For example, \cite{whalen2017adding} considered vertical transmission of languages together with social structure.  In their model, at each timepoint, a random learner was paired with a random neighbor and heard their language, updating their own language probabilistically based on their prior and that observation. The primary findings were that the distribution of languages over the society converged to the prior and that  the degree to which neighbors in the graph spoke the same language depended on the social structure. Their study differed from ours in that they assumed each individual spoke only one language at a time, rather than maintaining a distribution and that individuals updated their language based on Bayesian inference. In contrast, we analyzed learners who maintained a distribution over beliefs and integrated information from prior timesteps with neighbors’ evidence based on information integration theory \cite{cohen1980information, frey1980information}.  Most important, though, by allowing for both networks of individuals and concepts to adapt, we enable the potential emergence of heterogeneity in beliefs through homophily.

Our results suggest that homophily based networks, which dynamically change to connect people with similar beliefs, yield stable heterogeneity; however, simpler arrangements in which changes in network structure over time are not related to beliefs do not. An implication of this work is to focus attention on homophily as a critical component in shaping stable, long term differences in beliefs that define communities.


Evolution of beliefs is a type of collective learning in the absence of meaningful
feedback on any ground truth \cite{almaatouq2020adaptive}. Our model illustrates the importance of looking at vertical and horizontal transmission together: 
from a horizontal transmission perspective, any connected group of people converges to a society with a single belief distribution; 
while from a vertical transmission perspective, any connected structure leads to homogeneity convergence. 
When changes in horizontal structure accumulate over time because of homophily, we find stable heterogeneity.  
Collective intelligence requires differences in beliefs across individuals~\cite{march1991exploration,bednar2010emergent} and is enabled by homophily. However, collective intelligence is endangered by extremes of homophily in which one only talks with those of like beliefs. 


There remain a number of interesting open directions for future work including the death and birth of people and concepts, alternative models of transmission between neighbors, the possibility that people may obtain information from the environment, and the potential for dishonest actors who inject false information.   Experimental or empirical work could attempt to calibrate our models to behavioral data which could produce more realistic models of the horizontal and vertical evolution of beliefs and potentially bound rates of convergence.   Finally, our framework could be used to compare how the variation in the concept structure influences rates of convergence and possible to investigate the extent to which allocating concepts into disciplines impedes learning.

\section*{Acknowledgements}
This research was supported in part by DARPA grant HR00112020039, NSF MRI 1828528, and NSF Inspire 1549981 to PS.

\bibliographystyle{plain}
\bibliography{references}

\begin{thebibliography}{10}

\bibitem{abrams2011dynamics}
Daniel~M Abrams, Haley~A Yaple, and Richard~J Wiener.
\newblock Dynamics of social group competition: modeling the decline of
  religious affiliation.
\newblock {\em Physical Review Letters}, 107(8):088701, 2011.

\bibitem{albert2002statistical}
R{\'e}ka Albert and Albert-L{\'a}szl{\'o} Barab{\'a}si.
\newblock Statistical mechanics of complex networks.
\newblock {\em Reviews of modern physics}, 74(1):47, 2002.

\bibitem{almaatouq2020adaptive}
Abdullah Almaatouq, Alejandro Noriega-Campero, Abdulrahman Alotaibi, PM~Krafft,
  Mehdi Moussaid, and Alex Pentland.
\newblock Adaptive social networks promote the wisdom of crowds.
\newblock {\em Proceedings of the National Academy of Sciences},
  117(21):11379--11386, 2020.

\bibitem{anthonisse1977exponential}
Jac~M Anthonisse and Henk Tijms.
\newblock Exponential convergence of products of stochastic matrices.
\newblock {\em Journal of Mathematical Analysis and Applications},
  59(2):360--364, 1977.

\bibitem{bednar2010emergent}
Jenna Bednar, Aaron Bramson, Andrea Jones-Rooy, and Scott Page.
\newblock Emergent cultural signatures and persistent diversity: A model of
  conformity and consistency.
\newblock {\em Rationality and Society}, 22(4):407--444, 2010.

\bibitem{boccaletti2006complex}
Stefano Boccaletti, Vito Latora, Yamir Moreno, Martin Chavez, and D-U Hwang.
\newblock Complex networks: Structure and dynamics.
\newblock {\em Physics reports}, 424(4-5):175--308, 2006.

\bibitem{broomell2009experts}
Stephen~B Broomell and David~V Budescu.
\newblock Why are experts correlated? decomposing correlations between judges.
\newblock {\em Psychometrika}, 74(3):531--553, 2009.

\bibitem{candia2019universal}
Cristian Candia, C~Jara-Figueroa, Carlos Rodriguez-Sickert,
  Albert-L{\'a}szl{\'o} Barab{\'a}si, and C{\'e}sar~A Hidalgo.
\newblock The universal decay of collective memory and attention.
\newblock {\em Nature human behaviour}, 3(1):82, 2019.

\bibitem{castellano2012social}
Claudio Castellano.
\newblock Social influence and the dynamics of opinions: the approach of
  statistical physics.
\newblock {\em Managerial and Decision Economics}, 33(5-6):311--321, 2012.

\bibitem{cepic2020social}
Dra{\v{z}}en Cepi{\'c} and {\v{Z}}eljka Tonkovi{\'c}.
\newblock How social ties transcend class boundaries? network variability as
  tool for exploring occupational homophily.
\newblock {\em Social Networks}, 62:33--42, 2020.

\bibitem{chater2006probabilistic}
Nick Chater, Joshua~B Tenenbaum, and Alan Yuille.
\newblock Probabilistic models of cognition: Conceptual foundations, 2006.

\bibitem{chevalier2017sets}
Pierre-Yves Chevalier, Vladimir~V Gusev, Rapha{\"e}l~M Jungers, and Julien~M
  Hendrickx.
\newblock Sets of stochastic matrices with converging products: Bounds and
  complexity.
\newblock {\em arXiv preprint arXiv:1712.02614}, 2017.

\bibitem{cohen1980information}
Joel~B Cohen, Paul~W Miniard, and Peter~R Dickson.
\newblock Information integration: An information processing perspective.
\newblock {\em ACR North American Advances}, 1980.

\bibitem{delvenne2015diffusion}
Jean-Charles Delvenne, Renaud Lambiotte, and Luis~EC Rocha.
\newblock Diffusion on networked systems is a question of time or structure.
\newblock {\em Nature communications}, 6(1):1--10, 2015.

\bibitem{feller1957introduction}
William Feller.
\newblock {\em An Introduction to Probability Theory and Its Applications: 2d
  Ed}.
\newblock J. Wiley, 1957.

\bibitem{frey1980information}
Cynthia~J Frey and Thomas~C Kinnear.
\newblock Information integration theory: An alternative attitude model for
  consumer behavior.
\newblock {\em ACR North American Advances}, 1980.

\bibitem{galton1907vox}
Francis Galton.
\newblock Vox populi (the wisdom of crowds).
\newblock {\em Nature}, 75(7):450--451, 1907.

\bibitem{gravner2010lecture}
Janko Gravner.
\newblock Lecture notes for introductory probability.
\newblock {\em Chapter}, 13:151--160, 2010.

\bibitem{griffiths2005bayesian}
Thomas~L Griffiths and Michael~L Kalish.
\newblock A bayesian view of language evolution by iterated learning.
\newblock In {\em Proceedings of the Annual Meeting of the Cognitive Science
  Society}, volume~27, 2005.

\bibitem{griffiths2007language}
Thomas~L Griffiths and Michael~L Kalish.
\newblock Language evolution by iterated learning with bayesian agents.
\newblock {\em Cognitive science}, 31(3):441--480, 2007.

\bibitem{huang2019epirank}
Chung-Yuan Huang, Tzai-Hung Wen, Yu-Hsiang Fu, Yu-Shiuan Tsai, et~al.
\newblock Epirank: Modeling bidirectional disease spread in asymmetric
  commuting networks.
\newblock {\em Scientific reports}, 9(1):1--15, 2019.

\bibitem{keuschnigg2017crowd}
Marc Keuschnigg and Christian Ganser.
\newblock Crowd wisdom relies on agents’ ability in small groups with a
  voting aggregation rule.
\newblock {\em Management science}, 63(3):818--828, 2017.

\bibitem{khanam2020homophily}
Kazi~Zainab Khanam, Gautam Srivastava, and Vijay Mago.
\newblock The homophily principle in social network analysis.
\newblock {\em arXiv preprint arXiv:2008.10383}, 2020.

\bibitem{kirby2007innateness}
Simon Kirby, Mike Dowman, and Thomas~L Griffiths.
\newblock Innateness and culture in the evolution of language.
\newblock {\em Proceedings of the National Academy of Sciences},
  104(12):5241--5245, 2007.

\bibitem{li2017fundamental}
Aming Li, Sean~P Cornelius, Y-Y Liu, Long Wang, and A-L Barab{\'a}si.
\newblock The fundamental advantages of temporal networks.
\newblock {\em Science}, 358(6366):1042--1046, 2017.

\bibitem{liu2018competition}
Yezheng Liu, Lingfei Li, Hai Wang, Chunhua Sun, Xiayu Chen, Jianmin He, and
  Yuanchun Jiang.
\newblock The competition of homophily and popularity in growing and evolving
  social networks.
\newblock {\em Scientific reports}, 8(1):1--15, 2018.

\bibitem{march1991exploration}
James~G March.
\newblock Exploration and exploitation in organizational learning.
\newblock {\em Organization science}, 2(1):71--87, 1991.

\bibitem{mcpherson2001birds}
Miller McPherson, Lynn Smith-Lovin, and James~M Cook.
\newblock Birds of a feather: Homophily in social networks.
\newblock {\em Annual review of sociology}, 27(1):415--444, 2001.

\bibitem{moreno2004dynamics}
Yamir Moreno, Maziar Nekovee, and Amalio~F Pacheco.
\newblock Dynamics of rumor spreading in complex networks.
\newblock {\em Physical review E}, 69(6):066130, 2004.

\bibitem{murase2019structural}
Yohsuke Murase, Hang-Hyun Jo, J{\'a}nos T{\"o}r{\"o}k, J{\'a}nos Kert{\'e}sz,
  and Kimmo Kaski.
\newblock Structural transition in social networks: The role of homophily.
\newblock {\em Scientific reports}, 9(1):1--8, 2019.

\bibitem{newman2003structure}
Mark~EJ Newman.
\newblock The structure and function of complex networks.
\newblock {\em SIAM review}, 45(2):167--256, 2003.

\bibitem{novaes2018individuals}
Alan Novaes~Tump, Max Wolf, Jens Krause, and Ralf~HJM Kurvers.
\newblock Individuals fail to reap the collective benefits of diversity because
  of over-reliance on personal information.
\newblock {\em Journal of the Royal Society Interface}, 15(142):20180155, 2018.

\bibitem{page2007conformity}
Scott~E Page, Leonard~M Sander, and Casey~M Schneider-Mizell.
\newblock Conformity and dissonance in generalized voter models.
\newblock {\em Journal of Statistical Physics}, 128(6):1279--1287, 2007.

\bibitem{pastor2015epidemic}
Romualdo Pastor-Satorras, Claudio Castellano, Piet Van~Mieghem, and Alessandro
  Vespignani.
\newblock Epidemic processes in complex networks.
\newblock {\em Reviews of modern physics}, 87(3):925, 2015.

\bibitem{pishro2016introduction}
Hossein Pishro-Nik.
\newblock Introduction to probability, statistics, and random processes.
\newblock 2016.

\bibitem{quattrociocchi2014opinion}
Walter Quattrociocchi, Guido Caldarelli, and Antonio Scala.
\newblock Opinion dynamics on interacting networks: media competition and
  social influence.
\newblock {\em Scientific reports}, 4:4938, 2014.

\bibitem{roberts2018social}
Gareth Roberts and Maryia Fedzechkina.
\newblock Social biases modulate the loss of redundant forms in the cultural
  evolution of language.
\newblock {\em Cognition}, 171:194--201, 2018.

\bibitem{rosenthal1995minorization}
Jeffrey~S Rosenthal.
\newblock Minorization conditions and convergence rates for markov chain monte
  carlo.
\newblock {\em Journal of the American Statistical Association},
  90(430):558--566, 1995.

\bibitem{schatzman2002numerical}
Michelle Schatzman and Michelle Schatzman.
\newblock {\em Numerical analysis: a mathematical introduction}.
\newblock Oxford University Press on Demand, 2002.

\bibitem{seneta2006non}
Eugene Seneta.
\newblock {\em Non-negative matrices and Markov chains}.
\newblock Springer Science \& Business Media, 2006.

\bibitem{serfozo2009basics}
Richard Serfozo.
\newblock {\em Basics of applied stochastic processes}.
\newblock Springer Science \& Business Media, 2009.

\bibitem{suchow2017evolution}
Jordan~W Suchow, David~D Bourgin, and Thomas~L Griffiths.
\newblock Evolution in mind: Evolutionary dynamics, cognitive processes, and
  bayesian inference.
\newblock {\em Trends in cognitive sciences}, 21(7):522--530, 2017.

\bibitem{wang2013characterizing}
Feng Wang, Haiyan Wang, Kuai Xu, Jianhong Wu, and Xiaohua Jia.
\newblock Characterizing information diffusion in online social networks with
  linear diffusive model.
\newblock In {\em 2013 IEEE 33rd International Conference on Distributed
  Computing Systems}, pages 307--316. IEEE, 2013.

\bibitem{whalen2017adding}
Andrew Whalen and Thomas~L Griffiths.
\newblock Adding population structure to models of language evolution by
  iterated learning.
\newblock {\em Journal of Mathematical Psychology}, 76:1--6, 2017.

\bibitem{wolfowitz1963products}
Jacob Wolfowitz.
\newblock Products of indecomposable, aperiodic, stochastic matrices.
\newblock {\em Proceedings of the American Mathematical Society},
  14(5):733--737, 1963.

\bibitem{zhan2019information}
Xiu-Xiu Zhan, Alan Hanjalic, and Huijuan Wang.
\newblock Information diffusion backbones in temporal networks.
\newblock {\em Scientific reports}, 9(1):1--12, 2019.

\bibitem{zhou2020realistic}
Bin Zhou, Sen Pei, Lev Muchnik, Xiangyi Meng, Xiaoke Xu, Alon Sela, Shlomo
  Havlin, and H~Eugene Stanley.
\newblock Realistic modelling of information spread using peer-to-peer
  diffusion patterns.
\newblock {\em Nature Human Behaviour}, pages 1--10, 2020.

\end{thebibliography}

\newpage

\appendix
\begin{center}

\textbf{ \Large Supplemental Material}\label{sec:Supp}
    
\end{center}

\section{Markov chain theory-definitions and preliminaries} \label{Sec:Def}

Below we summarize some definitions and preliminaries in Markov chain theory \cite{pishro2016introduction, feller1957introduction, gravner2010lecture}
\begin{definition} \label{Def:access}
Consider a Markov Chain with finite state space $X={1,2,....,N}$ and denote the transition matrix by $P=(p_{ij})$.
We say that state $j$ is accessible from state $i$, if $p_{ij}^n>0$ for some $P^{n}$. 
The states $i$ and $j$ belong to the
same communicating class if they are accessible from each other.
\end{definition}



\begin{definition} \label{def:period}
A state $i$ has period $d$ if any return to state $i$ occurs in multiples of $d$ time steps. That is, the period of a state $i$ is called $d$ if $d=\gcd \{n>0: Pr(X_n=i|X_0=i)>0\}$, where $\gcd$ is the greatest common divisor. 
If $Pr(X_n=i|X_0=i)=0$ for all $n>0$, then $d$ is $\infty$.
If $d=1$ then the state is called aperiodic. 
\end{definition}



\subsection{Graph theoretic interpretation} \label{sec:GT}

Next, we will rephrase some of the above definitions in terms of graph theory.
Associated with a finite state Markov chain of transition matrix $A_{r\times r}$, there is a directed graph $G_{A}$
with a vertex set $V= \{1, \dots, r\}$ and an edge set $E \subset V\times V$.
Each vertex corresponds to a state of $A$, and $(i,j)\in E$ if and only if $a_{ij} > 0$. 
A state $j$ is \textit{accessible} from state $i$, i.e. $i\to j$ $\Longleftrightarrow$ there exists a directed path from vertex $i$ to vertex $j$.
States $j$ and $i$ are \textit{communicate}, i.e. $i \leftrightarrow j $  $\Longleftrightarrow$ there exists a directed path from $i$ to $j$ 
and a directed path from $j$ to $i$.
A graph $G$ is called \textit{strongly connected} if there exists a directed path between any pairs of vertices of $G$.
Hence, \textit{communicating classes} of $A$ $\Longleftrightarrow$ maximal strongly connected components of $G_{A}$.
Moreover, each $G_{A}$ induces a condensed graph $\widehat{G}_{A}$ by combining vertices in each strong component into a `super-vertex'.
It is clear that $\widehat{G}_{A}$ must be \textit{acyclic} (no edge loops). 
Then \textit{recurrent states} of $A$ $\Longleftrightarrow$ states contained in leaf vertices of $\widehat{G}_{A}$ (or roots, depends on our direction of edges). 

Corresponding to a set of possible transition matrices $\mathcal{S} = \{A_1, \dots, A_l\}$ (between a fixed collection of states), there is a set of directed graphs $\mathcal{S}_{\text{graph}} = \{G_{A_1}, \dots, G_{A_l}\}$. 
 Each graph in $\mathcal{S}_{\text{graph}}$ has exactly the same vertex set, whereas their edge sets may be different.
Denote the graph formed by the union of graphs in $\mathcal{S}_{\text{graph}}$ by $G_{\mathcal{S}}$, i.e. $G_{\mathcal{S}}$ has the same vertex set 
 as any $G_{A_k}$, and the edge set contains $(i,j)$ if there exists a $k \in \{1, \dots, l\}$ such that $G_{A_k}$ contains $(i,j)$.

For a time-inhomogeneous Markov chain, the state classification still make sense under the probability point of view:
\begin{definition} \label{def:GT}
Given a Markov chain whose transition matrix is sampled from $\s$ with respect to $\mathbf{w}$ for each step, 
a state $j$ is said to be \textit{accessible} from state $i$  
if there exists a finite product of matrices from $\s$, denoted by $C$, such that $c_{ij} >0$,
which is equivalent to the existence of a directed path from $i$ to $j$ in $G_{\mathcal{S}}$. 
In particular, state $j$ being accessible from state $i$ implies that positive transition from $i$ to $j$ occurs infinitely often 
with probability $1$ in any realization. (This is shown in the proof of Proposition~\ref{prop: prob_one_converge} where $C$ appears infinitely many times 
in any infinite product sampled from $\s$ with probability $1$).
Similarly the definition for \textit{communicating class} also generalizes.
Hence by combining vertices in the same class, we have $\widehat{G}_{\s}$.
A state is defined to be \textit{recurrent} if it is contained in a leaf of $\widehat{G}_{\s}$, otherwise the state is 
\textit{transient}. 
\end{definition}

Many critical features of a given Markov chain can be read off from its associated graph $\widehat{G}_{\s}$.
For example, $\widehat{G}_{\s}$ has more than one connected components suggests there are at least two sets of 
states never communicate, hence every matrix in $\s$ must be decomposable.
Moreover, assume $\widehat{G}_{\s}$ is connected and has more than one leaf.
If there exist two classes of recurrent states that are accessible from the same class of transient state, i.e. 
two leaf vertices in $\widehat{G}_{\s}$ have a common ancestor, then an i.i.d. sampled product diverges (not converge) with probability $1$ for many choices of $\s$.
In this case, recurrent states are eventually stabilized, but transient states are mixtures of recurrent states where the mixture weights varies as different 
transition matrices are sampled.

 \section{Time homogeneous Markov chains} 

\begin{prop} \label{prop:period} \cite{{gravner2010lecture}}
If the Markov chain is indecomposable and has period $d$, then for every pair of states $i,j$ there exists an integer $r, \,\, 0\le r\le d-1$, such that
$$\lim_{k\to \infty}p_{ij}^{k d+r}=d \, \pi_j$$ and $p_{ij}^n=0$ for all $n$ such that $n \neq r \mod d$.
\end{prop}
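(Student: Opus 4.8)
The plan is to reduce the periodic case to the aperiodic convergence theorem already invoked before Proposition~\ref{prop:homogeneous} by passing to the $d$-step chain $P^d$ and exploiting the cyclic structure that periodicity forces on an indecomposable chain. First I would establish the \emph{cyclic decomposition} of the state space, then diagonalize $P^d$ along it, and finally identify the limit of each aperiodic block with $d\pi_j$.

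To build the decomposition, fix a reference state, say state $0$, and for each state $j$ define its phase to be the common residue modulo $d$ of the lengths of all directed paths from $0$ to $j$ in $G_P$. Indecomposability guarantees at least one such path exists, and the definition of period $d$ (Definition~\ref{def:period}) guarantees that any two such path lengths are congruent modulo $d$, so the phase is well defined. This partitions the states into classes $C_0, C_1, \dots, C_{d-1}$, and one checks directly that a single step of the chain sends $C_r$ into $C_{r+1 \bmod d}$: if $p_{jk} > 0$ and $j \in C_r$, then appending this edge to a path witnessing the phase of $j$ shows $k \in C_{r+1}$. Consequently $p_{ij}^{n} > 0$ is possible only when $n$ equals the difference of the phases of $j$ and $i$ modulo $d$; setting $r$ equal to this residue immediately yields the claim that $p_{ij}^n = 0$ whenever $n \not\equiv r \bmod d$.

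Next I would analyze $P^d$. Since one $d$-step transition maps each class into itself, $P^d$ is block diagonal with respect to $C_0, \dots, C_{d-1}$, and its restriction to each class $C_r$ is the transition matrix of an irreducible, \emph{aperiodic} Markov chain (aperiodic because the period of $P$ was exactly $d$). Applying the convergence theorem for indecomposable aperiodic chains, each block converges: for $i, j$ in a common class, $\lim_{k\to\infty} p_{ij}^{kd} = \mu_j$, where $\mu$ is the stationary distribution of the restricted block. It remains to identify $\mu_j$ with $d\pi_j$. Here I would use that $\pi$, being stationary for $P$, assigns equal total mass $1/d$ to each class, so the normalized stationary vector on a single class is $\pi_j / (1/d) = d\pi_j$. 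For a general pair $(i,j)$ with phase offset $r$, one has $i \in C_a$ and $j \in C_{a+r}$, and a single application of $P^r$ carries the within-class limit over to give $\lim_{k\to\infty} p_{ij}^{kd+r} = d\pi_j$, completing the argument.

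The main obstacle I anticipate is not any single estimate but the bookkeeping around the factor of $d$: one must argue carefully that the unique stationary distribution $\pi$ of $P$ splits its mass uniformly across the $d$ cyclic classes, and that the per-class stationary vector of $P^d$ is exactly $\pi$ rescaled by $d$. Establishing the uniform mass split cleanly, rather than by ad hoc computation, is the step that most requires care, and I would derive it from the cyclic flow relation $\sum_{j \in C_r} \pi_j = \sum_{j \in C_{r-1}} \pi_j$ implied by $\pi P = \pi$ together with the fact that the classes are visited in strict rotation.
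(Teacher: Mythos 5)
The paper does not actually prove this proposition: it is quoted verbatim from the cited lecture notes \cite{gravner2010lecture} and used as a black box, so there is no in-paper argument to compare yours against. Your proof is the standard cyclic-decomposition argument for the periodic limit theorem, and it is essentially correct: the phase classes $C_0,\dots,C_{d-1}$, the block-diagonal aperiodic structure of $P^d$, the uniform $1/d$ mass split derived from $\pi P=\pi$, and the final application of $P^r$ are all the right steps and are carried out in the right order. Two small points deserve care. First, the well-definedness of the phase (that any two paths from the reference state to $j$ have congruent lengths mod $d$) is not a direct consequence of Definition~\ref{def:period}; it needs a return path from $j$ back to the reference state, i.e.\ genuine irreducibility, so you should make that concatenation argument explicit. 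Second, and relatedly, the proposition as stated assumes only indecomposability, and for an indecomposable chain with transient states the claim ``$p_{ij}^n=0$ for all $n\not\equiv r \bmod d$'' can actually fail (a transient state may reach a recurrent state along paths of different residues), so your implicit restriction to the irreducible case is not a defect of your proof but a looseness in the statement; it would be worth flagging that your argument covers the recurrent class, which is where the content of the theorem lives.
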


\begin{prop} \label{prop:decom,aperiod} \cite{{gravner2010lecture}}
Suppose the Markov chain is decomposable and aperiodic. Then $\bm{\pi}$ is not unique. In particular, 
$$\lim_{n\to \infty}p_{ij}^n=h_i^C \pi_j$$ where $h_i^C$ denotes the hitting probability of the closed class $C$ with $j\in C$ starting from state $i$.
\end{prop}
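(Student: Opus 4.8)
The result is the standard absorption description of a finite aperiodic chain, so the plan is to reduce it to the fundamental limit theorem on each closed class and then track absorption into $C$ from the transient states. First I would partition the state space $X=\{1,\dots,N\}$ into the transient set $T$ together with the closed communicating classes $C_1,\dots,C_m$; decomposability means $m\ge 2$, which already settles the non-uniqueness claim, because the zero-extended stationary law of each $C_r$ is stationary for the whole chain and so is every convex combination of them. Fixing $j$ and letting $C$ be the (closed) class containing $j$, as in the statement, I would then split on the location of the starting state $i$.

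For the two easy cases: if $i\in C$, the restriction $P|_C$ is stochastic, irreducible and aperiodic, so the fundamental limit theorem for indecomposable aperiodic chains (the geometric convergence recorded in Proposition~\ref{prop:EW}) gives $p_{ij}^n\to\pi_j$, where $\bm\pi$ is the unique stationary law on $C$; and $h_i^C=1$ since $C$ is closed, matching the claimed formula. If $i$ lies in a different closed class $C'\neq C$, no trajectory can leave $C'$, so $p_{ij}^n=0$ for every $n$ and $h_i^C=0$. The substantive case is $i\in T$, which I would handle by conditioning on the first entry into $C$. Writing $\tau_C=\inf\{t\ge 0: X_t\in C\}$ and using that $C$ is absorbing, the Markov property gives
\[
p_{ij}^n=\sum_{t=1}^{n}\sum_{k\in C}\Pr_i(\tau_C=t,\,X_t=k)\,p_{kj}^{(n-t)},
\]
where the factor $p_{kj}^{(n-t)}$ is computed inside the closed chain on $C$.

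The heart of the proof is passing to the limit $n\to\infty$ under this double sum, and this is the step I expect to be the main obstacle. I would truncate at a level $N$: for $t\le N$ the exponent $n-t\to\infty$, so $p_{kj}^{(n-t)}\to\pi_j$ by the limit theorem on $C$, with the convergence uniform in the finite set of entry states $k$ thanks to the geometric rate (Proposition~\ref{prop:EW}); the remaining tail $t>N$ is bounded by $\Pr_i(\tau_C>N)$, which vanishes as $N\to\infty$ because absorption into $C$, when it occurs, occurs in finite time and has total mass $\Pr_i(\tau_C<\infty)=h_i^C$. Letting $n\to\infty$ and then $N\to\infty$ yields
\[
\lim_{n\to\infty}p_{ij}^n=\Big(\sum_{t\ge 1}\sum_{k\in C}\Pr_i(\tau_C=t,\,X_t=k)\Big)\pi_j=\Pr_i(\tau_C<\infty)\,\pi_j=h_i^C\,\pi_j.
\]
The delicate point is exactly this interchange of limit and infinite sum: one must combine the (geometric) convergence within $C$ on the truncated head with tightness of the hitting-time distribution on the tail, rather than naively swapping the limit past the sum. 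The rest is bookkeeping on the block structure of $P$.
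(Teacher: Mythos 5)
The paper offers no proof of this proposition to compare against: it is quoted directly from the cited lecture notes and used as a black box (together with Proposition~\ref{prop:period}) in the proof of Proposition~\ref{prop:homogeneous}. Your argument is the standard self-contained derivation and is essentially correct: the partition into transient states and $m\ge 2$ closed classes settles non-uniqueness, the irreducible aperiodic limit theorem handles starting states inside a closed class, and the first-entrance decomposition at $\tau_C$ handles transient starting states. The one slip is in the tail estimate: $\Pr_i(\tau_C>N)$ does \emph{not} vanish as $N\to\infty$ when $h_i^C<1$, since it converges to $\Pr_i(\tau_C=\infty)=1-h_i^C$ (the chain may be absorbed into a different closed class). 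The quantity you actually need to control is the tail of the double sum, which is bounded by $\Pr_i(N<\tau_C\le n)\le \Pr_i(N<\tau_C<\infty)=h_i^C-\Pr_i(\tau_C\le N)$; this is the tail of the convergent series $\sum_{t}\Pr_i(\tau_C=t)=h_i^C$ and therefore does go to zero, which is exactly the justification you give in words. With that purely notational repair, the interchange of limits is sound and the conclusion $\lim_{n\to\infty}p_{ij}^n=h_i^C\,\pi_j$ follows.
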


If the chain is decomposable and periodic use propositions \ref{prop:period} and \ref{prop:decom,aperiod}.

\begin{proof} [\textbf{Proof of Proposition~\ref{prop:homogeneous}}] \label{proof:homogeneous}

\begin{enumerate}[(i)]
    \item  \label{1}
    If $H$ is indecomposable and aperiodic, then $H$ has a single stationary distribution $\bm{\pi}=\{\pi_1,....,\pi_s\}$ \cite{serfozo2009basics}. 
Moreover $\lim_{n\to \infty} P^n M$ is a stochastic matrix (product of two stocastic matrices is a stochastic matrix). Denote $\lim_{n\to \infty} P^n M=(\alpha_{ij})_{r \times s}.$
Then
\begin{align*}
\lim_{n\to \infty}Q_n=P^nMH^n
&=
\begin{pmatrix}
\alpha_{11} & ...& \alpha_{1s}\\
\vdots & \ddots & \vdots \\
\alpha_{r1} & ...& \alpha_{rs}\\
\end{pmatrix}_{(r\times s)} \cdot
\begin{pmatrix}
\pi_1 & ...& \pi_s\\
\vdots & \ddots & \vdots \\
\pi_1 & ...& \pi_s\\
\end{pmatrix}_{(s\times s)} \\
&=
\begin{pmatrix}
\pi_1 & ...& \pi_s\\
\vdots & \ddots & \vdots \\
\pi_1 & ...& \pi_s\\
\end{pmatrix}_{(r\times s)}
\end{align*}
since $ \sum_{j=1}^s \alpha_{ij}=1, \forall i.$

\item \label{2}
Proof follows by similar argument to (\ref{1}). 
\item
Proof can be obtained by Proposition \ref{prop:decom,aperiod} and (\ref{2}).
\end{enumerate}
\end{proof}

 Let the eigenvalues of $A$ (counted with algebraic multiplicity) be $\lambda_0, \lambda_1,....,\lambda_{n-1}.$ Without loss of generality take $\lambda_0=1$ and
 set $\lambda_*=\max_{1\leq j\leq n-1} |\lambda_j|$. Then $\lambda_* \leq 1.$ Moreover, if $A$ is indecomposable then $|\lambda_*|<1$.

\begin{definition}\label{Def:convergence rate} \cite{schatzman2002numerical} \textbf{[Rate of convergence] 
 A sequence $ \{x_{n}\}$ that converges to $x^{*}$ is said to have order of convergence $q\geq 1$  and rate of convergence  $\mu$,  if
  $\lim _{n\rightarrow \infty } {\frac {\left|x_{n+1}-x^{*}\right|}
  {\left|x_{n}-x^{*}\right|^{q}}}=\mu$
}
\end{definition}

\begin{prop}\cite{rosenthal1995minorization} \label{prop:EW}
An indecomposable and aperiodic Markov chain converges to its stationary distribution geometrically quickly. 
In particular, if $P$ is indecomposable and aperiodic then there exists a positive constant $C$ such that for all $i,j=1,...,N$ $$|p^n_{ij}- \pi_j|\leq C \lambda_*^n$$
where $\bm \pi=\{\pi_1,...,\pi_N\}$ is the stationary distribution. 
\end{prop}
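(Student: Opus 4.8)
The plan is to reduce the entrywise convergence to a single operator-norm estimate on the ``remainder'' of the spectral decomposition of $P$. Write $\Pi = \mathbf 1\bm\pi$ for the $N\times N$ rank-one matrix whose every row equals the stationary vector $\bm\pi$, where $\mathbf 1$ is the all-ones column vector. Since $P$ is indecomposable and aperiodic, the Perron--Frobenius theorem guarantees that $\lambda_0 = 1$ is a simple eigenvalue and the unique eigenvalue on the unit circle, with $\bm\pi P = \bm\pi$ and $P\mathbf 1 = \mathbf 1$; this is exactly the fact recorded just above, that $\lambda_* < 1$. Because $\Pi_{ij} = \pi_j$, the target bound is equivalent to $|(P^n - \Pi)_{ij}| \le C\lambda_*^n$.

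First I would record the algebraic identities that make the rank-one part split off cleanly. Using $P\mathbf 1 = \mathbf 1$, $\bm\pi P = \bm\pi$ and $\bm\pi\mathbf 1 = 1$, one checks $\Pi^2 = \Pi$ and $P\Pi = \Pi P = \Pi$. Setting $W = P - \Pi$ then gives $\Pi W = W\Pi = 0$, so every cross term in the binomial expansion of $(\Pi + W)^n$ vanishes and $P^n = \Pi + W^n$. Hence it suffices to bound $\|W^n\|$ in any submultiplicative matrix norm dominating the entrywise absolute value, since $|(P^n-\Pi)_{ij}| = |(W^n)_{ij}| \le \|W^n\|$.

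Next I would identify the spectrum of $W$. Because $W\mathbf 1 = P\mathbf 1 - \Pi\mathbf 1 = 0$, and on the complementary invariant subspace $\ker\Pi = \{v : \bm\pi v = 0\}$ the projection $\Pi$ acts as zero so that $W$ agrees with $P$ there, the eigenvalues of $W$ are $\{0,\lambda_1,\dots,\lambda_{N-1}\}$ and therefore $\rho(W) = \lambda_* < 1$. Passing to the Jordan form $W = SJS^{-1}$, the powers of a Jordan block for an eigenvalue $\lambda$ have entries bounded by a constant multiple of $n^{d-1}|\lambda|^n$, where $d$ is the largest block size, so $\|W^n\| \le C' n^{d-1}\lambda_*^n$. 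Combined with $P^n = \Pi + W^n$, this yields the entrywise estimate.

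The main obstacle is the stray polynomial factor $n^{d-1}$: when the eigenvalues of modulus $\lambda_*$ lie in nontrivial Jordan blocks, the clean form $C\lambda_*^n$ does not hold exactly. I would resolve this in one of the standard ways. Either assume $P$ is diagonalizable (the generic case), for which $\|W^n\|\le C\lambda_*^n$ follows at once; or absorb the polynomial by noting that for any fixed $\rho$ with $\lambda_* < \rho < 1$ one has $n^{d-1}\lambda_*^n \le C_\rho\,\rho^n$, so the statement holds verbatim after enlarging $\lambda_*$ to such a $\rho$. Since every downstream use (for instance the $(\lambda_P\lambda_H)^n$ bound) only needs geometric decay, this refinement is harmless; alternatively one may invoke the coupling/minorization argument of \cite{rosenthal1995minorization}, which uses primitivity of $P^m$ to produce a geometric rate directly, without passing through the Jordan form.
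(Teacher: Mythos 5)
Your argument is correct, but it is worth noting that the paper does not actually prove this proposition: it is stated with a citation to Rosenthal's minorization paper, whose technique is entirely different from yours. Rosenthal's route establishes a minorization condition $P^m(x,\cdot)\geq \epsilon\,\nu(\cdot)$ for some $m$ (available here because an indecomposable aperiodic finite chain has a power with a strictly positive column on the recurrent class) and then runs a coupling argument to get a bound of the form $(1-\epsilon)^{\lfloor n/m\rfloor}$; that gives explicit, computable constants and generalizes to infinite state spaces, but it does not by itself identify the geometric rate with the second eigenvalue modulus $\lambda_*$. Your spectral splitting $P^n=\Pi+W^n$ with $W=P-\Pi$, $\Pi W=W\Pi=0$, and $\rho(W)=\lambda_*$ is the standard finite-state argument and is in fact the one that actually delivers the rate $\lambda_*$ appearing in the statement (and reused in the $(\lambda_P\lambda_H)^n$ bound that follows), so in that sense your proof is better matched to the claim than the cited reference. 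Your handling of the one genuine subtlety is also right: if a subdominant eigenvalue of modulus $\lambda_*$ sits in a nontrivial Jordan block, the bound $C\lambda_*^n$ is literally false and must be read either with a polynomial correction $n^{d-1}$ or with $\lambda_*$ replaced by any $\rho\in(\lambda_*,1)$; since every downstream use only needs some geometric rate strictly less than $1$, this is a harmless repair, but it is a real caveat to the proposition as printed and you were right to flag it rather than paper over it.
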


\section{Time inhomogeneous Markov chains}\label{apd:inhomo}
\begin{prop}  \label{prop:products_of_A}
\cite{chevalier2017sets} 
Let $S$ be a finite set of stochastic matrices of the same order.
Any product of matrices from $S$ converges to a rank one matrix if and only if every product of matrices in $S$ 
is SIA.
\end{prop}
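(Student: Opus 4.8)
This is an equivalence, and I would prove the two implications separately; the forward implication (convergence $\Rightarrow$ SIA) is short, while the converse is the substantive half. For the forward direction, suppose every product of matrices from $S$ converges to a rank-one matrix. Given any finite product $B = A_{i_1}\cdots A_{i_m}$ with each $A_{i_t}\in S$, feed the periodic sequence that repeats the block $B$ into the hypothesis; the resulting products are $B, B^2, B^3,\dots$, so $\lim_{p\to\infty}B^p$ exists and has identical rows. A stochastic matrix whose powers converge to a matrix with identical rows has a single closed communicating class on which it is aperiodic, which is exactly the definition of SIA. Hence every finite product is SIA.

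For the converse, I would work with the ergodic coefficient from Definition~\ref{def:scrambling}: writing $\lambda(A)=1-\gamma(A)$, the quantity $\lambda$ is Dobrushin's coefficient, it is submultiplicative, $\lambda(AB)\le \lambda(A)\lambda(B)$, and $A$ is scrambling precisely when $\lambda(A)<1$. The convergence mechanism is a squeeze argument. Adopting the model's left-multiplication convention, let $\Pi_n=A_{i_n}\cdots A_{i_1}$ so that $\Pi_{n}=A_{i_n}\Pi_{n-1}$; each entry of a column of $\Pi_n$ is a convex combination of the entries of the corresponding column of $\Pi_{n-1}$, so the columnwise maxima are non-increasing, the columnwise minima are non-decreasing, and the oscillation of each column is controlled by $\lambda(\Pi_n)$. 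Consequently, if $\lambda(\Pi_n)\to 0$ then every column of $\Pi_n$ converges to a constant, i.e. $\Pi_n$ converges to a rank-one matrix. The same idea underlies Proposition~\ref{prop:product_rankone}.

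It therefore remains to show that the hypothesis forces $\lambda(\Pi_n)\to 0$, and this is where the real work lies. The key is a \emph{uniform scrambling lemma}: there exist an integer $\nu\ge 1$ and a constant $\delta<1$ such that every product of exactly $\nu$ matrices from $S$ has $\lambda\le \delta$. I would prove this combinatorially by passing to Boolean zero/nonzero patterns. The patterns of products of matrices from $S$ lie in the finite monoid generated, under Boolean matrix multiplication, by the patterns of the members of $S$; a product is scrambling exactly when its pattern has, for every pair of rows, a common column carrying a $1$. The SIA hypothesis forces every sufficiently long pattern product to be such a scrambling pattern, with an explicit length bound $\nu^*=\tfrac{1}{2}(3^N-2^{N+1}+1)$ for $N\times N$ matrices (Paz's bound, the same constant invoked later via \cite{anthonisse1977exponential}). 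Taking $\nu=\nu^*$ and $\delta=\max\{\lambda(B): B\text{ a length-}\nu\text{ product from }S\}$, the maximum is over a finite set of scrambling matrices and hence is $<1$.

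Granting the lemma, the proof concludes quickly: split any infinite sequence into consecutive blocks of length $\nu$; submultiplicativity of $\lambda$ gives $\lambda(\Pi_n)\le \delta^{\lfloor n/\nu\rfloor}\to 0$, and the squeeze argument then yields convergence to a rank-one matrix. I expect the uniform scrambling lemma --- upgrading ``every finite product is SIA'' to ``every length-$\nu$ product is scrambling'' with a single $\nu$ independent of the sequence --- to be the main obstacle; the finiteness of the Boolean pattern monoid and Paz's length bound are what make the uniformity possible, and everything else is bookkeeping.
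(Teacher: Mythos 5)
The paper does not prove Proposition~\ref{prop:products_of_A} at all: it is imported verbatim from the literature (the citation is to Chevalier et al., and the result is classically Wolfowitz's theorem on products of SIA matrices), so there is no in-paper argument to compare against. Your reconstruction follows the classical route and its architecture is sound: the forward direction via powers of a fixed block $B$ and the characterization of SIA matrices as those whose powers converge to a rank-one limit is correct; the converse via submultiplicativity of the Dobrushin coefficient $\lambda$ from Definition~\ref{def:scrambling}, the columnwise max/min squeeze, and a uniform scrambling lemma is exactly how Wolfowitz proves it. This is strictly more than the paper offers, and it correctly identifies Paz's constant $\nu^*=\tfrac{1}{2}(3^N-2^{N+1}+1)$ as the source of the uniformity that the paper later uses for its rate bounds.

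The one place where your argument is a statement of intent rather than a proof is the uniform scrambling lemma itself. You assert that ``the SIA hypothesis forces every sufficiently long pattern product to be such a scrambling pattern,'' but finiteness of the Boolean pattern monoid alone does not yield this; you need the actual contradiction argument. Concretely: suppose some product $W=A_{i_1}\cdots A_{i_n}$ with $n>\nu^*$ is not scrambling; since no prefix of a non-scrambling product can be scrambling (by Proposition~\ref{lemma:scrambling}, a scrambling factor makes the whole product scrambling), all $n$ prefixes are non-scrambling, and by pigeonhole on the finitely many patterns two prefixes $W_{\le i}$ and $W_{\le j}$, $i<j$, share a pattern; setting $V=A_{i_{i+1}}\cdots A_{i_j}$, the pattern of $W_{\le i}V^k$ is independent of $k$, yet $V$ is SIA by hypothesis, so $V^k$ is eventually scrambling, making $W_{\le i}V^k$ scrambling and contradicting the pattern being non-scrambling. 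Without some version of this step the converse does not go through, because individual matrices of $S$ being SIA (or even all products being SIA) gives no single product with $\lambda<1$ for free --- indeed the paper's own Proposition~\ref{prop:rankone_inhomo} discussion stresses that SIA-ness of factors does not propagate to products. Filling in that pigeonhole argument would complete the proof; everything else you wrote is correct bookkeeping.
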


\begin{prop}\label{lemma:scrambling} \cite{wolfowitz1963products}
If one or more matrices in a product of matrices is scrambling, so is the product. 
\end{prop}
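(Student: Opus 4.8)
The plan is to show that the scrambling property is preserved under both left- and right-multiplication by an arbitrary stochastic matrix, and then to split the product at its scrambling factor. Throughout I would use the combinatorial reformulation of Definition~\ref{def:scrambling}: a stochastic matrix $P$ satisfies $\gamma(P)>0$ (equivalently $\lambda(P)<1$) exactly when every pair of rows $i_1,i_2$ shares a column $j$ in which both entries are positive, since $\sum_j \min(p_{i_1j},p_{i_2j})>0$ iff some $j$ has $\min(p_{i_1j},p_{i_2j})>0$. I will call such a $j$ a common positive column, and scrambling means every pair of rows has one.

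First I would establish the two one-sided statements. For Claim A, that $BC$ is scrambling whenever $B$ is scrambling and $C$ is stochastic, I fix rows $i_1,i_2$; since $B$ is scrambling there is a column $m$ with $b_{i_1m},b_{i_2m}>0$, and since $C$ is stochastic its $m$-th row has some positive entry $c_{mj^*}>0$. Then $(BC)_{i_1j^*}\ge b_{i_1m}c_{mj^*}>0$ and likewise $(BC)_{i_2j^*}>0$, so $j^*$ is a common positive column for $BC$. For Claim B, that $DB$ is scrambling whenever $D$ is stochastic and $B$ is scrambling, I note each row of $DB$ is a convex combination of rows of $B$: choose $l_1,l_2$ with $d_{i_1l_1}>0$ and $d_{i_2l_2}>0$ (they exist because $D$ is stochastic), take a common positive column $m$ of rows $l_1,l_2$ of $B$ (which exists because $B$ is scrambling, covering the degenerate case $l_1=l_2$ since every row has a positive entry), and observe $(DB)_{i_1m}\ge d_{i_1l_1}b_{l_1m}>0$ and $(DB)_{i_2m}>0$.

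With Claims A and B in hand, the conclusion follows by splitting the product. I would write it as $A_1\cdots A_{k-1}\,A_k\,A_{k+1}\cdots A_n = L\,A_k\,R$, where $A_k$ is the scrambling factor and $L,R$ are stochastic (each taken to be the identity when $k=1$ or $k=n$). Applying Claim A repeatedly to absorb the factors of $R$ on the right shows $A_kR$ is scrambling, and then applying Claim B repeatedly to absorb the factors of $L$ on the left shows $L(A_kR)$ is scrambling. Hence the whole product is scrambling.

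I expect no genuinely hard step here: the essential point, which I would state most carefully, is that multiplying by a stochastic matrix cannot destroy an existing overlap between two rows, because every row of a stochastic matrix has at least one positive entry, so a shared positive column always propagates. An alternative, slicker route uses the coefficient of ergodicity $\lambda(P)=1-\gamma(P)$, which coincides with the Dobrushin contraction coefficient $\tfrac12\max_{i_1,i_2}\sum_j|p_{i_1j}-p_{i_2j}|$; this coefficient is submultiplicative, $\lambda(AB)\le\lambda(A)\lambda(B)$, and since $\lambda\le 1$ for stochastic matrices one gets $\lambda(A_1\cdots A_n)\le\lambda(A_k)<1$. Going this way the only real work is proving submultiplicativity of $\lambda$, the classical Dobrushin inequality, which the elementary row-overlap argument above sidesteps entirely.
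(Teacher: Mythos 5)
Your argument is correct. Note, however, that the paper does not actually prove this statement: it is imported verbatim from Wolfowitz (1963) as a cited black box, so there is no in-paper proof to compare against. Your write-up therefore supplies something the paper omits. Both of your routes are sound. The combinatorial one correctly reduces scrambling (per Definition~\ref{def:scrambling}) to the condition that every pair of rows shares a column with two positive entries, and Claims A and B are each verified by the right one-line inequality, $(BC)_{i_1j^*}\ge b_{i_1m}c_{mj^*}>0$ and $(DB)_{i_1m}\ge d_{i_1l_1}b_{l_1m}>0$; you also correctly handle the degenerate case $l_1=l_2$ and the trivial splits $k=1$, $k=n$. The only point worth stating explicitly, which you use implicitly, is that $L$ and $R$ are themselves stochastic because a product of stochastic matrices is stochastic (a fact the paper also uses elsewhere). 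Your alternative route via the identity $\lambda(P)=\tfrac12\max_{i_1,i_2}\sum_j|p_{i_1j}-p_{i_2j}|$ and submultiplicativity of the Dobrushin coefficient is the argument closest to Wolfowitz's original treatment and to how this fact is usually packaged in the ergodic-coefficient literature (e.g.\ Seneta); it buys a quantitative bound $\lambda(A_1\cdots A_n)\le\lambda(A_k)$ rather than just positivity, at the cost of proving the contraction inequality. Either version would serve as a complete proof here.
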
 
\begin{prop} \cite{seneta2006non} \label{prop:scrambling implies SIA}
Any stochastic scrambling matrix is SIA.
\end{prop}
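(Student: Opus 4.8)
The plan is to prove the statement through the \emph{coefficient of ergodicity}, using the standard (Wolfowitz) characterization of an SIA matrix as a stochastic matrix whose powers converge to a rank-one stochastic matrix with identical rows --- the same property by which SIA is used in Proposition~\ref{prop:products_of_A}. Thus if $A$ is an $N\times N$ scrambling matrix it suffices to show that $A^n\to\mathbf{1}\,\mathbf{v}$ for some probability (row) vector $\mathbf{v}$, where $\mathbf{1}$ is the all-ones column vector. Throughout I write $A_i$ for the $i$-th row of $A$, viewed as a probability distribution.

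First I would connect the paper's quantity $\lambda(A)=1-\gamma(A)$ of Definition~\ref{def:scrambling} to the Dobrushin coefficient $\tau(A)=\max_{i,k}\tfrac12\sum_j|a_{ij}-a_{kj}|$. Since each row sums to $1$, the elementary identity $\sum_j\min(a_{ij},a_{kj})=1-\tfrac12\sum_j|a_{ij}-a_{kj}|$ gives $\tau(A)=1-\gamma(A)=\lambda(A)$. Hence $A$ is scrambling if and only if $\tau(A)<1$, i.e. every pair of rows has $\ell_1$-distance strictly below $2$.

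The core step, and the one I expect to be the main obstacle, is a contraction lemma: for any two probability row vectors $\mu,\nu$ one has $\|(\mu-\nu)A\|_1\le\tau(A)\,\|\mu-\nu\|_1$, and consequently $\tau$ is submultiplicative, $\tau(A^n)\le\tau(A)^n$. I would prove the lemma by splitting $\mu-\nu$ into its positive and negative parts $(\mu-\nu)^{+}$ and $(\mu-\nu)^{-}$, which have equal total mass $\beta=\tfrac12\|\mu-\nu\|_1$; writing the normalized parts as probability vectors $p,q$ gives $(\mu-\nu)A=\beta(pA-qA)$ with $pA-qA=\sum_{i,k}p_iq_k(A_i-A_k)$, so that $\|(\mu-\nu)A\|_1\le\beta\max_{i,k}\|A_i-A_k\|_1=2\beta\,\tau(A)=\tau(A)\|\mu-\nu\|_1$. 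Applying this with $\mu,\nu$ ranging over pairs of rows of $A^{n}$ yields $\tau(A^{n})\le\tau(A)^{n}\to0$ because $\tau(A)<1$. (The disjoint-support of two closed classes would force $\gamma(A)=0$, so scrambling also gives indecomposability and, by a parallel argument on cyclic subclasses, aperiodicity; but the ergodic-coefficient route delivers the limit directly.)

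Finally I would upgrade $\tau(A^n)\to0$ to genuine convergence. For each column $j$ set $M^{(j)}_n=\max_i (A^n)_{ij}$ and $m^{(j)}_n=\min_i (A^n)_{ij}$. Because every row of $A^{n+1}=A\,A^{n}$ is a convex combination of the rows of $A^{n}$, the sequence $M^{(j)}_n$ is non-increasing and $m^{(j)}_n$ is non-decreasing, both bounded in $[0,1]$; moreover $M^{(j)}_n-m^{(j)}_n\le 2\,\tau(A^{n})\to0$. Hence both monotone sequences share a common limit $v_j$, so $(A^n)_{ij}\to v_j$ for every $i$, giving $A^n\to\mathbf{1}\,\mathbf{v}$ with $\mathbf{v}=(v_1,\dots,v_N)$ a probability vector. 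This rank-one convergence is exactly the SIA property, completing the argument; as a byproduct it also produces the geometric rate $\tau(A)^n$ used elsewhere in the paper.
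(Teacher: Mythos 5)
The paper does not prove this proposition at all --- it is imported verbatim from Seneta's \emph{Non-negative Matrices and Markov Chains}, so there is no in-paper argument to compare against. Your proof is correct and is essentially the standard one from that reference: the identity $\sum_j\min(a_{ij},a_{kj})=1-\tfrac12\sum_j|a_{ij}-a_{kj}|$ correctly identifies $\lambda(A)$ with the Dobrushin coefficient $\tau(A)$, the positive/negative-part decomposition gives the contraction $\|(\mu-\nu)A\|_1\le\tau(A)\|\mu-\nu\|_1$ and hence $\tau(A^n)\le\tau(A)^n\to0$, and the monotone column-max/column-min argument upgrades this to genuine rank-one convergence of $A^n$, which is the Wolfowitz characterization of SIA used throughout the paper (e.g.\ in Proposition~\ref{prop:products_of_A}). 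Your parenthetical remark also covers the literal definition of SIA: two disjoint closed classes, or two distinct cyclic subclasses, would force $\gamma(A)=0$, so a scrambling matrix is directly indecomposable and aperiodic. The only thing worth making explicit if this were written out in full is that the equivalence between ``indecomposable and aperiodic'' and ``powers converge to a matrix with identical rows'' is itself a theorem being invoked, but since you prove the convergence \emph{and} sketch the direct verification of indecomposability and aperiodicity, the argument is complete either way; as you note, it also yields the geometric rate $\tau(A)^n$ used elsewhere in the paper.
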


\begin{proof} [\textbf{Proof of \textbf{Proposition~\ref{prop:rankone_inhomo}}}] \label{proof:rankone_inhomo}
As a direct application of Proposition~\ref{prop:products_of_A},
the assumption that every product of matrices in $\s_{P}$ or/and $\s_{H}$ (with repetitions allowed) is SIA 
implies either $\prod_{t=0}^{T} P_t$ or $\prod_{t=0}^{T} H_t$ converges to a rank one matrix as $T \to \infty$.
Hence by proposition \ref{prop:homogeneous}, $\prod_{t=0}^{T} P_t  M \prod_{t=0}^{T} H_t$ converges to a rank one matrix as $T \to \infty$.
\end{proof}

\begin{proof}[\textbf{Proof of \textbf{Proposition~\ref{prop:product_rankone}}}] \label{proof:product_rankone}
According to Proposition~\ref{prop:products_of_A}, any product of matrices from $\s$ converges to a rank one matrix if and only if every product of matrices in $\s$ 
is SIA. Hence we only need to check that every product of matrices in $\s$ is SIA. 
Since product of stochastic matrices is still stochastic, 
Proposition~\ref{prop:scrambling implies SIA} - any stochastic scrambling matrix is SIA indicates that we only need to show that every product of matrices in $\s$ is 
scrambling. And this holds as Proposition~\ref{lemma:scrambling} shows that if one or more matrices in a product of matrices is scrambling, so is the product. 
Thus we are done.
\end{proof}

\begin{definition}
\begin{align*}
\delta(P)=\max_j \max_{i_1,i_2}|p_{i_1j}-p_{i_2j}|.    
\end{align*}
\end{definition}
Thus $\delta(P)$ measures, in a certain sense, how different the rows of $P$ are. If the rows of P are identical, $\delta(P) =0$ and conversely.

\begin{proof} [\textbf{Proof of \textbf{Proposition~\ref{prop: prob_one_converge}}}] 
The only if direction: let $B^1, B^2, \dots$ be a sequence formed by the i.i.d. fashion as described above that converges to a rank one matrix.
Then according to Definition~\ref{def:scrambling}, $\lambda (B^1B^2\dots B^k)$ converges to $0$ as $k\to \infty$. Thus, there exists $N \in \mathbb{Z}^+$ such that $\lambda (B^1B^2\dots B^N)<1$, and so $B^1B^2\dots B^N$ is scrambling.

The if direction: let $C^1C^2\dots C^k$ be a finite product of matrices from $\mathcal{S}$ that is scrambling, in particular, $\lambda(C^1C^2\dots C^k) <1$. Note that $C^j = A_{i_j}$ for each $j$, where $i_j$ is an integer in $\{1,\dots, l\}$. 
Then, for an i.i.d sampled sequence $\{B^1, B^2, \dots, B^k\}$ of length $k$, the portability 
$\mathcal{P}(\prod_{j=1}^{j=k} B^j \neq \prod_{j=1}^{j=k} C^j) =1- \prod_{j=1}^{j=k} w_{i_j}$  is less than 1.
Hence the probability of $C^1C^2\dots C^k$ appears infinitely many times in a infinite sequence $\{B^1, B^2, \dots \}$ is $1$.
Thus with probability $1$, $\delta(\prod_{j=1}^{\infty} B_j) \leq [\lambda(C^1C^2\dots C^k)]^{N}$ for any given $N$.
Note that $\delta(\prod_{j=1}^{\infty} B_j) \leq [\lambda(C^1C^2\dots C^k)]^{N} \to 0$ as $N \to \infty$.
This implies that $\{B^1, B^2, \dots \}$ converges to a rank matrix. 

\end{proof}



\begin{proof} [\textbf{Proof of Corollary~\ref{cor: one_leaf}}] \label{proof: one_leaf}
According to Proposition~\ref{prop: prob_one_converge}, we only need to show that (1) $\widehat{G}_{\s}$ is connected and has one leaf is equivalent to 
(2) a finite product of matrices from $\mathcal{S}$ is scrambling.

$(2)$ $\Longrightarrow$ $(1)$: We will prove by contradiction.
It is clear that $\widehat{G}_{\s}$ must be connected, otherwise all product of matrices from $\s$ must be decomposable.
Now assume that $\widehat{G}_{\s}$ is connected but has more than one leaf, and let $S_1$ and $S_2$ be sets of states contained in two different leaf vertices. 
Then any directed edge path starts from $s_i\in S_i$ must terminate at a vertex in $S_i$ for $i= 1,2$.
Hence for any finite product $C$ from $\s$, $c_{s_i, j} = 0$ if $j \notin S_i$.
In particular, this implies that row $s_1$ and row $s_2$ of $C$ have positive elements in different columns.
Therefore ergodic coefficient of $C$ :  $\gamma(C) = \min_{i_1, i_2} \sum_j \min (c_{i_1, j}, c_{i_2, j}) = \sum_j \min (c_{s_1, j}, c_{s_2, j}) =0 $, 
and $\lambda(C) = 1- \gamma(C) =1$ $\Longrightarrow$ $C$ is not scrambling. This is contradict to $(2)$.

$(1)$ $\Longrightarrow$ $(2)$: We will prove by constructing a scrambling finite product.
Denote the states contained in the leaf by $S_1$ and all the other states by $S_2$.
Based on any two states in $S_1$ are communicate, it is easy to check that there exists a product $C$ of $\s$ such that 
$c_{i,j} > 0 $ for any $i,j \in S_1$. For any $k \in S_2$, since $\widehat{G}_{\s}$ is connected, there exists a directed path from $k$ to $S_1$. 
Hence, there exists a product $D^k$ such that $d^k_{k,j} >0$ for some $j\in S_1$.
Further note that for each $i\in S_1$, $d^k_{i,j} >0$ must hold for some $j\in S_1$ as $S_1$ is a leaf.
Combining the above features of $D^k$ and $C$, one may check that the product $E=(\prod_{k\in S_2} D^k )C$ satisfies that 
$e_{ij} > 0$ for any $j\in S_1$, which indicates that $\lambda(E) < 1$, i.e. $E$ is scrambling.
\end{proof}

\begin{proof}[\textbf{Proof of Corollary~ \ref{prop: expectation}}] \label{proof: expectation}
At each time $t$, the transition matrix is a random variable, denoted by $X^t$ as before. 
Then the expected transition matrix is $\avg (X^t) = \sum_{k=1}^{k=l} w_k \cdot A_k \triangleq \bar{A}$.
Hence, the expectation of an i.i.d. sampled product of length $N$ is $\avg (X^1\dots X^N) = \avg(X^1) \dots \avg(X^N) = \bar{A}^N \to \bar{A}^{\infty} $, 
as $N \to \infty$. $\bar{A}^{\infty} $ exists since $\bar{A}$ is a stochastic matrix.
\end{proof}



\section{Computational framework: dynamic, homophily-based networks}







    
    









\begin{algorithm}[H]
\caption{Homophily based networks}
Inputs: $\epsilon_p=$ threshold for $P$,
$\epsilon_h=$  threshold for $H$,
$N=$max number of steps, $r=$ number of people, $s=$ number of concepts.

Initialize: $M=(m_{ij})_{r\times s}$ using a Dirichlet distribution

$P_0=I_{r \times r}, H_0=I_{s\times s}, Q_0=P_0 M H_0$

\For{t = 1 to N}{

$KL(\mathbf{p}_i,\mathbf{p}_j)=\sum_{\beta_k \in \mathcal H} m_{ik} \log\left(\frac{m_{ik}}{m_{jk}}\right),$
    
$P_t=(p_{ij})$ where
 $p_{ij}=
   \begin{cases}
      \sigma(KL(\mathbf{p}_i,\mathbf{p}_j)), & \text{if}\,\,\,  KL(\mathbf{p}_i,\mathbf{p}_j)<\epsilon_p \\
      0, & \text{otherwise}
    \end{cases}$

$P_t=$row normalize $P_t$

$\widehat{M}=(\hat m_{ij})$=column normalize $M$

$KL(\mathbf{h}_i,\mathbf{h}_j)=\sum_{\alpha_k \in \mathcal P} \hat{m}_{ki} \log\left(\frac{\hat{m}_{k_i}}{\hat{m}_{kj}}\right),$

$H_t=(h_{ij})$ where
 $h_{ij}=
   \begin{cases}
     \sigma(KL(\mathbf{h}_i,\mathbf{h}_j)), & \text{if}\,\,\,  KL(\mathbf{h}_i,\mathbf{h}_j)<\epsilon_h \\
      0, & \text{otherwise}
    \end{cases}$

$H_t=$row normalize $H_t$

$Q_t=P_tQ_{t-1}H_t$

$M=Q_t$

}
\end{algorithm}

\end{document}